\newtheoremstyle{mystyle}{5pt}{5pt}{\itshape}{}{\bfseries}{.}{5pt}{} \theoremstyle{mystyle} 
\theoremstyle{mystyle} \newtheorem{cor}{Corollary}
\theoremstyle{mystyle} \newtheorem{lem}{Lemma}
\theoremstyle{mystyle} \newtheorem{prop}{Proposition}
\theoremstyle{mystyle} \newtheorem{thm}{Theorem}
\theoremstyle{mystyle} \newtheorem{ass}{Assumption}
\theoremstyle{mystyle} \newtheorem{defn}{Definition}
\theoremstyle{mystyle} \newtheorem{rem}{Remark}
\newcommand{\E}{\mathbb{E}} 
\renewcommand{\P}{\mathbb{P}} 
\newcommand{\R}{\mathbb{R}} 
\newcommand{\N}{\mathbb{N}} 
\newcommand{\Z}{\mathbb{Z}}
\renewcommand{\epsilon}{\varepsilon}
\DeclareMathOperator*{\argmax}{arg\,max}
\DeclarePairedDelimiter{\ceil}{\lceil}{\rceil}
\DeclarePairedDelimiter{\floor}{\lfloor}{\rfloor}
\renewcommand{\@algocf@capt@plain}{above}% formerly {bottom}
\begin{document}

\title{One-bit feedback is sufficient for upper confidence bound policies} 
\author{Daniel Vial\footnote{University of Texas at Austin, dvial@utexas.edu}, Sanjay Shakkottai\footnote{University of Texas at Austin,
sanjay.shakkottai@utexas.edu}, R.\ Srikant\footnote{University of Illinois at Urbana–Champaign,
rsrikant@illinois.edu}}
\maketitle

\begin{abstract}
    We consider a variant of the traditional multi-armed bandit problem in which each arm is only able to provide one-bit feedback during each pull based on its past history of rewards. Our main result is the following: given an upper confidence bound policy which uses full-reward feedback, there exists a coding scheme for generating one-bit feedback, and a corresponding decoding scheme and arm selection policy, such that the ratio of the regret achieved by our policy and the regret of the full-reward feedback policy asymptotically approaches one. 
\end{abstract}

\section{Introduction} \label{secIntro}

The multi-armed bandit (MAB) problem is a classical model for online learning. Among its many applications, those arising in wireless sensor networks and data centers are particularly relevant to this work (see, e.g., \cite{avner2016multi,li2016collaborative,semasinghe2017game,tekin2014distributed}). Such applications feature cooperative agents, e.g., base stations/remote sensors in a sensor network or servers in a data center. In certain situations, the agents making decisions and those observing and/or storing the effects of decisions are separate entities. For example, a base station may control a remote sensor that makes observations, or data relevant to one server's algorithm may be stored on another. To model such situations in the MAB framework, where the decision maker perfectly observes and recalls rewards (or at least observes new rewards and recalls the mean of past rewards), one must assume the observing agents can transmit rewards to the decision maker. If rewards are continuous-valued (or even if the alphabet is finite but large), this may be infeasible in applications like sensor networks, where bandwidth is severely constrained.

Motivated by this issue, we consider a novel variant of the MAB problem in which the agent making decisions and those observing rewards can only communicate in a bandwidth-limited fashion. More specifically, we consider a set of $K+1$ agents cooperating to minimize regret from a $K$-armed stochastic bandit. The agents are arranged in a leader-follower configuration, with each of the $K$ followers located at a distinct arm. At each time $t$, the leader broadcasts an arm $A_t$, and the follower located at $A_t$ pulls this arm, observes a real-valued reward drawn from an unknown distribution $\nu_{A_t}$, and transmits a $B$-bit \textit{message} (which may only depend on past rewards generated by $A_t$) to the leader. No other communication is permitted; in particular, followers at arms $k \neq A_t$ do not transmit messages at time $t$.

In this setting, it is natural to assume bounded rewards, since messages will inevitably involve quantization. To simplify the discussion, we further restrict to $[0,1]$-valued rewards, as well as $B=1$ (though our ideas can be extended to bounded rewards and $B \in \N$). Under these assumptions, the leader must decide which arm to pull using one-bit feedback instead of the actual, continuous-valued rewards, distinguishing our setting from the standard MAB framework. Our goal is to design a \textit{message policy} $\pi^m$, which dictates how followers map their arms' reward histories to one-bit messages, and an \textit{arm policy} $\pi^a$, which dictates how the leader maps the message history to $A_t$, to minimize the expected cumulative \textit{regret}
\begin{equation}\label{eqRegretIntro}
R_n ( \pi^a , \pi^m ) =   \sum_{t=1}^n \E_{\pi^a, \pi^m} \left[ \max_{k \in [K]} \mu_k - \mu_{A_t}  \right] .
\end{equation}
Here $n \in \N$ is the horizon, $[K] = \{1,\ldots,K\}$ is the set of arms, and $\mu_k$ is the expected reward from arm $k$. Thus, $R_n ( \pi^a , \pi^m )$ is the difference in expected cumulative rewards when the best arm is known \textit{a priori} (in which case the leader always broadcasts $\argmax_{ k \in [K] } \mu_k$) and when following the policies $\pi^a, \pi^m$. See Section \ref{secSetting} for a formal problem definition.

\subsection{Our contributions}

The main contributions of this work are the design and analysis of message and arm policies for which the regret \eqref{eqRegretIntro} is essentially equivalent to the regret one would suffer in the standard MAB setting. Before making this precise, we describe the proposed policies.

For the message policy (see Section \ref{secMessage}), we use a simple idea in which followers group several one-bit messages into multi-bit \textit{packets} that encode quantized estimates of their arms' average rewards. The key trade-off is packet length. On the one hand, quantization error decreases as packet length grows. On the other hand, the leader must wait for the entire packet to arrive to exploit this increased accuracy; since the follower can only transmit one bit per arm pull, long packets require many pulls of arms which may yield low rewards. Our message policy balances this trade-off by dynamically increasing packet length with the number of pulls, in a manner that (approximately) minimizes regret (see Remark \ref{remPacketLength}).

For the arm policy (see Section \ref{secArm}), we take inspiration from \textit{upper confidence bound} (UCB) policies \cite{agrawal1995sample} in the MAB setting. In particular, we consider policies of the form
\begin{equation}\label{eqGeneralUcb}
\tilde{A}_t = \argmax_{ k \in [K] } g_t ( \hat{\mu}_k ( t-1 ) , \tilde{T}_k(t-1) ) ,
\end{equation}
where $\hat{\mu}_k ( t-1 )$ and $\tilde{T}_k(t-1)$ are the average reward from arm $k$ and the number of pulls of $k$ before time $t$, and $g_t : [0,1] \times \Z_+ \rightarrow \R_+$ is some function\footnote{Here and moving forward, $\Z_+ = \{ 0,1,\ldots\}$ and $\R_+ = [0,\infty)$.}. In Section \ref{secUcb}, we define a class of such functions, which includes those used in the \texttt{UCB1} \cite{auer2002finite} and \texttt{KL-UCB} \cite{cappe2013kullback,garivier2011kl,maillard2011finite} MAB policies. From this class, we define a corresponding class of arm policies for the leader-follower setting, in which (roughly) the leader replaces the average reward in \eqref{eqGeneralUcb} by the quantized estimate most recently transmitted by the follower. We write $R_n( \pi^a(g),\pi^m)$ for the regret \eqref{eqRegretIntro} when using our message policy and the arm policy defined by $g = \{ g_t \}_{t \in \N}$.

Our main result shows that for this class of UCB policies, performance degradation compared to the standard MAB setting is negligible. More specifically, we compare the regret $R_n ( \pi^a(g) , \pi^m )$ to the regret incurred when using $g$ with full reward observations, i.e.,
\begin{equation}\label{eqRegretIntroMAB}
\tilde{R}_n ( g ) =   \sum_{t=1}^n \E_g \left[ \max_{k \in [K]} \mu_k - \mu_{\tilde{A}_t} \right] , 
\end{equation}
where $\{ \tilde{A}_t \}_{t=1}^n$ are chosen according to \eqref{eqGeneralUcb}. We then prove (see Theorem \ref{thmMain})
\begin{equation} \label{eqMainResultIntro}
\limsup_{n \rightarrow \infty} R_n ( \pi^a(g) , \pi^m ) / \tilde{R}_n ( g) \leq 1 .
\end{equation}
Thus, our message policy ensures no asymptotic performance loss for UCB-based arm policies when the leader observes one-bit messages instead of $[0,1]$-valued rewards. In other words, one-bit feedback is sufficient for UCB policies. In addition to \eqref{eqRegretIntroMAB}, we derive finite-horizon regret bounds for the \texttt{UCB1} and \texttt{KL-UCB} variants of our algorithm; see Corollaries \ref{corFiniteUCB1} and \ref{corFiniteKL}. Finally, we demonstrate the performance of our policies numerically in Section \ref{secExp}.

\subsection{Related work}

The setting studied in this paper is novel, to the best of our knowledge. However, it is not the first variant of the MAB problem in which rewards are imperfectly observed. For example, several papers have considered the case where rewards are corrupted by an adversary before being observed by the decision maker (see, e.g., \cite{gupta2019better,lykouris2018stochastic}); this is clearly distinct from our cooperative setting. 

As discussed above, our policies adapt algorithms like \texttt{UCB1} \cite{auer2002finite} and \texttt{KL-UCB} \cite{cappe2013kullback,garivier2011kl,maillard2011finite} to the leader-follower setting. We define these algorithms formally in Section \ref{secPrelim}. Like our policies, they apply to $[0,1]$-valued rewards (and general bounded rewards, after normalization). We note that, in the MAB setting, \texttt{KL-UCB} attains asymptotically optimal regret for Bernoulli rewards and lower regret than \texttt{UCB1} for general $[0,1]$-valued rewards. In this work, we draw similar conclusions for the corresponding leader-follower algorithms, i.e., the \texttt{KL-UCB} version of our policy outperforms the \texttt{UCB1} version. However, we include \texttt{UCB1} in our discussion to demonstrate the generality of the class of UCB policies covered by our analysis.

\section{Preliminaries} \label{secPrelim}

\subsection{Formal definition of the setting} \label{secSetting}

Let $K \in \{2,3,\ldots\}$ denote the number of arms, and let $\nu_1, \ldots , \nu_K$ be distributions over $[0,1]$. For each $k \in [K] = \{1,\ldots,K\}$, let $\{ X_{k,s} \}_{s \in \N}$ be i.i.d.\ random variables distributed as $\nu_k$. We then define a sequential game as follows: at each time $t \in \N$,
\begin{itemize}
\item The leader broadcasts an arm $A_t = \pi_t^a ( \{   M_{k,s} \}_{ k \in [K] , s \in [T_k(t-1)]  } )$, where $\pi_t^a$ is a $[K]$-valued function, $T_k(t-1) = | \{ t' \in [t-1] : A_{t'} = k \} |$ is the number of broadcasts of $k$ before time $t$, and $M_{k,s}$ is the message received when $k$ was broadcast for the $s$-th time.
\item Follower $A_t$ pulls its arm, observes the reward $X_{A_t, T_{A_t}(t) }$, and transmits the message $M_{A_t, T_{A_t}(t) } = \pi_t^m ( \{ X_{A_t,s} \}_{s \in [ T_{A_t}(t) ] } )$ to the leader, where $\pi_t^m$ is a binary-valued function.
\end{itemize}
Here $\pi^a = \{ \pi^a_t \}_{t \in \N}$ and $\pi^m = \{ \pi^m_t \}_{t \in \N}$ are the arm and message policies discussed in Section \ref{secIntro}. Note $\pi_t^a$ takes as input the messages $\{   M_{k,s} \}_{ k \in [K] , s \in [T_k(t-1)]  }$ that the leader has received before time $t$, while $\pi_t^m$ takes as input the rewards $\{ X_{A_t,s} \}_{s \in [ T_{A_t}(t) ] }$ follower $A_t$ has observed up to and including $t$. We reiterate that followers $k \neq A_t$ do not transmit at time $t$. 

As in Section \ref{secIntro}, we let $\mu_k = \E X_{k,1}$ denote the expected reward from arm $k$. For simplicity, we assume arms are sorted by their expected rewards, there is a unique best arm, and none of the arm distributions are degenerate: $0 < \mu_K \leq \cdots \leq \mu_2 < \mu_1 < 1$. For $k \in \{2,\ldots,K\}$, we define the $k$-th arm gap $\Delta_k = \mu_1 - \mu_k \in (0,1)$. We can then rewrite the regret \eqref{eqRegretIntro} as
\begin{equation}\label{eqRegret}
R_n ( \pi^a , \pi^m ) =   \sum_{t=1}^n \E_{\pi^a, \pi^m} [ \mu_1 - \mu_{A_t}  ] = \sum_{k=2}^K \Delta_k \E_{\pi^a, \pi^m} [ T_k(n) ] ,
\end{equation}
where the subscript $\pi^a, \pi^m$ indicates that $\{ A_t \}_{t \in \N}$ are chosen according to $\pi^a, \pi^m$\footnote{When clear from context, we discard this subscript.}.

To clarify notation, we also define the standard MAB setting and UCB policies discussed in Section \ref{secIntro}. Here the game is as follows. For each $t \in [K]$, the decision maker pulls $\tilde{A}_t = t$ and observes $X_{ t,1 }$. For each $t \in \{ K+1,K+2,\ldots \}$, the decision maker pulls
\begin{equation}\label{eqTildeAt}
\tilde{A}_t = \argmax_{ k \in [K] } g_t ( \hat{\mu}_{k , \tilde{T}_k(t-1) } , \tilde{T}_k(t-1) ) ,
\end{equation}
where $g_t : [0,1] \times \N \rightarrow \R_+$ is a function we call the \textit{decision function}, $\tilde{T}_k(t-1) = | \{ t' \in [t-1] : \tilde{A}_t = k \} |$ is the number of pulls of arm $k$ before time $t$, and $\hat{\mu}_{k , \tilde{T}_k(t-1)} = \sum_{s=1}^{\tilde{T}_k(t-1)} X_{k,s}$ is the empirical mean of rewards generated by arm $k$ before time $t$. The decision maker then observes the reward $X_{\tilde{A}_t, \tilde{T}_{\tilde{A}_t}(t)}$ generated by arm $\tilde{A}_t$.

\begin{rem} \label{remDecisionFunction}
Note we assume arms are played in round robin once (i.e., $\tilde{A}_t = t$ for $t \leq K$), and then chosen by the decision function $g_t$, which only depends on the empirical mean and number of pulls of $k$. For instance, \texttt{UCB-1} sets $g_t(x,s) = x + \sqrt{2 \log(t) / s}$; in words, the decision function is the empirical mean plus a bonus that decreases in the number of pulls. The idea is to balance \emph{exploitation} (i.e., pulling arms with large empirical means) and \emph{exploration} (i.e., pulling all arms often enough that the empirical means are accurate). 
\end{rem}

Finally, we note that, similar to \eqref{eqRegret}, we can rewrite the regret \eqref{eqRegretIntroMAB} as
\begin{equation}\label{eqRegretMAB}
\tilde{R}_n(g) =  \sum_{t=1}^n \E_g [ \mu_1 - \mu_{ \tilde{A}_t }  ] = \sum_{k=2}^K \Delta_k \E_g [ \tilde{T}_k(n) ] .
\end{equation}

\subsection{Upper confidence bound (UCB) policies} \label{secUcb}

We next state assumptions on the decision function that will allow us to make a meaningful comparison between the MAB regret \eqref{eqRegretMAB} and the regret of our forthcoming policy. First, we have an intuitive monotonicity condition, which says the decision function should increase in time and average reward, while decreasing in the number of plays (see Remark \ref{remDecisionFunction}).
\begin{ass} \label{assMono}
$g_t(x,s)$ is increasing in $t$, increasing in $x$, and decreasing in $s$.
\end{ass}

Next, we require the decision function to converge to the empirical mean as the number of pulls grows (i.e., as the empirical mean converges to the true mean with high probability).
\begin{ass} \label{assConvSimple}
For any $x \in (0,1)$ and any $t \in \N$, $\lim_{s \rightarrow \infty} g_t(x,s) = x$.
\end{ass}

The preceding assumptions do not preclude the case where the decision function equals the empirical mean, i.e., $g_t(x,s) = x$. To ensure the policy is sufficiently explorative, we also require $g_t(x,s)$ to be ``sufficiently above'' $x$. The precise condition, motivated by the analysis of \texttt{KL-UCB}, involves two functions. First, define $f : \N \rightarrow \R_+$ by
\begin{equation}
f(t) = 1 + t ( \log t)^2\ \forall\ t \in \N .
\end{equation}
Next, define the (Bernoulli) \textit{Kullback–Leibler divergence} $d : [0,1]^2 \rightarrow \R_+ \cup \{\infty\}$ by
\begin{equation} \label{eqKLdefn}
d(p,q) = p \log \frac{p}{q} + (1-p) \log \frac{1-p}{1-q} ,
\end{equation}
where $0 \log 0 = 0 \log \frac{0}{0} = 0, r \log \frac{r}{0} = \infty\ \forall\ r > 0$ by convention. We then have the following.
\begin{ass} \label{assUpper}
For any $x \in (0,1)$, $t \in \N$, and $s \in \N$, we have $g_t(x,s) > x$. Furthermore, if $g_t(x,s) < 1$, then $d( x , g_t(x,s) ) \geq \log ( f(t) ) / s$.
\end{ass}

Assumptions \ref{assMono}-\ref{assUpper} will allow us to derive finite-time regret bounds for our policies. To prove \eqref{eqMainResultIntro}, we require further assumptions regarding the asymptotic behavior of the decision function. These will be stated in terms of the function $S_t^g(x_1, x_2)$, defined as follows.
\begin{defn} \label{defnConv}
For any $t \in \N$ and any $x_1 , x_2 \in (0,1)$ s.t.\ $x_1 < x_2$, let $S_t^g(x_1,x_2) = \min \{ s \in \N : g_t(x_1,s) \leq x_2 \}$, where $S_t^g(x_1,x_2) = \infty$ by convention if $g_t(x_1,s) > x_2\ \forall\ s \in \N$.
\end{defn}
In words, $S_t^g(x_1,x_2)$ is the number of pulls $s$ for the decision function $g_t(x_1,s)$ to fall below $x_2$. When clear from context, we discard the superscript $g$ and simply write $S_t(x_1,x_2)$. Note the corner case $g_t(x_1,s) > x_2\ \forall\ s \in \N$ cannot occur when Assumption \ref{assConvSimple} holds.

The next assumption states $S_t^g(x_1, x_2) = o(t)$. This is trivially satisfied by any reasonable UCB policy, for which $S_t^g(x_1,x_2) = O(\log t)$ (see Proposition \ref{propAssumptions} for examples).
\begin{ass} \label{assConv1}
For any $x_1 , x_2 \in (0,1)$ s.t.\ $x_1 < x_2$, $\lim_{t \rightarrow \infty} S_t^g(x_1,x_2) / t = 0$.
\end{ass}

Our final assumption is a continuity condition, which essentially says that if $\delta$ is small and $t$ is large, then $S_t ( x_1 + \delta , x_2 - \delta ) \approx S_{ (1-\delta) t } ( x_1 - \delta , x_2 + \delta )$. The exact condition is stated in terms of vanishing sequences $\{ \delta_i \}_{i \in \N}$ satisfying
\begin{equation}
0 < x_1 + \delta_i < x_2 - \delta_i < 1 , \quad  0 < x_1 - \delta_i < x_2 + \delta_i < 1 \quad \forall\ i .
\end{equation}
These inequalities ensure that, when Assumption \ref{assConvSimple} holds, the quantities $S_t^g ( x_1 + \delta_i  , x_2 - \delta_i )$ and $S_{ \ceil{(1-\delta_i)t} }^g ( x_1 - \delta_i , x_2 + \delta_i )$ appearing in Assumption \ref{assConv2} are well-defined and finite.
\begin{ass} \label{assConv2}
For any $x_1 , x_2 \in (0,1)$ s.t.\ $x_1 < x_2$, and any $\{ \delta_i \}_{i \in \N} \subset ( 0 , \min \{ x_1 , 1 - x_2 , \frac{x_2-x_1}{2} \} )$ s.t.\ $\lim_{i \rightarrow \infty} \delta_i = 0$,
\begin{equation}
\lim_{i \rightarrow \infty}  \lim_{t \rightarrow \infty} \frac{ S_t^g ( x_1 + \delta_i  , x_2 - \delta_i ) }{ S_{ \ceil{(1-\delta_i)t}  }^g ( x_1 - \delta_i , x_2 + \delta_i ) } = 1.
\end{equation}
\end{ass}

Recall our goal is to understand the regret behavior of popular bandit policies like \texttt{UCB1} and \texttt{KL-UCB}. These policies correspond to the decision functions
\begin{equation}
g^{\texttt{UCB1}}_t ( x , s ) = x + \sqrt{ 2 \log (t) / s } , \quad  g^{\texttt{KL-UCB}}_t ( x , s )  = \max \left\{ y \in [0,1] : d(x,y) \leq \log ( f(t)) / s \right\} .
\end{equation}
We refer the reader to \cite[Chapters 7-10]{lattimore2020bandit} for a detailed discussion of these algorithms. In both cases, the decision function belongs to the class defined above, as formalized by the following proposition.

\begin{prop} \label{propAssumptions}
$g^{\texttt{UCB1}}$ and $g^{\texttt{KL-UCB}}$ satisfy Assumptions \ref{assMono}-\ref{assConv2}, and for any $t \in \N$ and any $x_1, x_2 \in (0,1)$ s.t.\ $x_1 < x_2$,
\begin{equation}
S_t^{ g^{\texttt{UCB1}} } (x_1,x_2) = \ceil*{ 2 \log (t) / ( x_2 - x_1 )^2  } ,  \quad S_t^{ g^{\texttt{KL-UCB}} } (x_1,x_2) = \ceil*{ \log (f(t)) / d ( x_1 , x_2 )  } .
\end{equation}
\end{prop}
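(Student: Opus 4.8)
The approach I would take is to establish the two closed-form expressions for $S_t^g$ first, since they reduce Assumptions \ref{assConv1} and \ref{assConv2} to elementary asymptotics, and then to verify Assumptions \ref{assMono}--\ref{assUpper} directly from the formulas for the two decision functions. \emph{Step 1 (the $S_t^g$ formulas).} For $g^{\texttt{UCB1}}$, the inequality $g^{\texttt{UCB1}}_t(x_1,s) = x_1 + \sqrt{2\log(t)/s} \le x_2$ is equivalent to $s \ge 2\log(t)/(x_2-x_1)^2$, so the least $s \in \N$ that works is $\ceil{2\log(t)/(x_2-x_1)^2}$. For $g^{\texttt{KL-UCB}}$, I would first record that for fixed $x_1 \in (0,1)$ the map $y \mapsto d(x_1,y)$ is continuous, equals $0$ at $y = x_1$, is strictly increasing on $[x_1,1)$, and tends to $\infty$ as $y \to 1$; hence $g^{\texttt{KL-UCB}}_t(x_1,s)$ is exactly the unique $y \in [x_1,1)$ with $d(x_1,y) = \log(f(t))/s$, and this is $\le x_2$ iff $\log(f(t))/s \le d(x_1,x_2)$, which gives $S_t^{g^{\texttt{KL-UCB}}}(x_1,x_2) = \ceil{\log(f(t))/d(x_1,x_2)}$. (Both identifications presume $\log t > 0$, resp.\ $\log f(t) > 0$; the degenerate case $t = 1$, where $g_t(x,s) = x$, is harmless since the decision function is only ever invoked for $t \ge K+1$.)

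\emph{Step 2 (Assumptions \ref{assMono}--\ref{assUpper}).} For $g^{\texttt{UCB1}}$ the monotonicities in $(t,x,s)$ are read off the formula, $\sqrt{2\log(t)/s} \to 0$ as $s \to \infty$ gives Assumption \ref{assConvSimple}, and for Assumption \ref{assUpper} one has $g^{\texttt{UCB1}}_t(x,s) > x$ and, by Pinsker's inequality $d(p,q) \ge 2(p-q)^2$, $d(x, g^{\texttt{UCB1}}_t(x,s)) \ge 4\log(t)/s \ge \log(f(t))/s$, where the last step is the elementary bound $t^4 \ge 1 + t(\log t)^2$ for $t \ge 1$. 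For $g^{\texttt{KL-UCB}}$: $f$ is increasing (since $t \mapsto t(\log t)^2$ has derivative $(\log t)(\log t + 2) \ge 0$ on $[1,\infty)$), so the threshold $\log(f(t))/s$ is increasing in $t$ and decreasing in $s$, and enlarging the threshold enlarges the sublevel set $\{y : d(x,y)\le \cdot\}$ and hence its maximum; this gives monotonicity in $t$ and $s$, while monotonicity in $x$ follows from $\partial_x d(x,y) = \log\frac{x(1-y)}{(1-x)y} < 0$ for $x < y$, which makes the upper root of $d(x,\cdot) = c$ increasing in $x$. Assumption \ref{assConvSimple} holds by continuity of $d(x,\cdot)$ at $x$, and Assumption \ref{assUpper} is essentially by construction: $g^{\texttt{KL-UCB}}_t(x,s) > x$ as soon as the threshold is positive, and if $g^{\texttt{KL-UCB}}_t(x,s) < 1$ then it is the root of $d(x,\cdot) = \log(f(t))/s$ by Step 1, so the stated inequality holds with equality.

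\emph{Step 3 (Assumptions \ref{assConv1}--\ref{assConv2}), the main obstacle.} Since $\log f(t) = \log t + 2\log\log t + o(1) = O(\log t)$, both $S_t^g$ from Step 1 are $O(\log t) = o(t)$, which is Assumption \ref{assConv1}. Assumption \ref{assConv2} is the delicate part. Substituting the formulas, the ratio is $\ceil{2\log t/((x_2-x_1)-2\delta_i)^2}\big/\ceil{2\log\ceil{(1-\delta_i)t}/((x_2-x_1)+2\delta_i)^2}$ for $g^{\texttt{UCB1}}$ and $\ceil{\log f(t)/d(x_1+\delta_i,x_2-\delta_i)}\big/\ceil{\log f(\ceil{(1-\delta_i)t})/d(x_1-\delta_i,x_2+\delta_i)}$ for $g^{\texttt{KL-UCB}}$, and the care is needed to show that the inner $\lim_{t\to\infty}$ exists and collapses despite the ceilings, the replacement of $\log f$ by $\log$, and the $\ceil{(1-\delta_i)t}$ sitting inside a logarithm. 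The facts I would use are that the arguments of all four ceilings diverge in $t$ (so $\ceil{a_t}/a_t \to 1$ removes them), that $\log\ceil{(1-\delta_i)t}/\log t \to 1$, and that $\log f(m)/\log m \to 1$ as $m \to \infty$; these reduce the inner limit to $\big((x_2-x_1)+2\delta_i\big)^2\big/\big((x_2-x_1)-2\delta_i\big)^2$, resp.\ $d(x_1-\delta_i,x_2+\delta_i)\big/d(x_1+\delta_i,x_2-\delta_i)$, with the hypothesis $\delta_i < \min\{x_1,1-x_2,(x_2-x_1)/2\}$ keeping every argument admissible. Letting $\delta_i \to 0$ and using continuity of $\delta \mapsto ((x_2-x_1)\pm 2\delta)^2$ and of $d$, together with $0 < d(x_1,x_2) < \infty$, both limits equal $1$, as required; the outer limit is then pure continuity, and everything else is elementary calculus plus the two structural facts about $d$ used in Step 2.
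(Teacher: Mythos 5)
Your proposal is correct and follows essentially the same route as the paper's proof: the same closed-form derivation of $S_t^g$ (via the characterization of $g^{\texttt{KL-UCB}}_t(x,s)$ as the unique root of $d(x,\cdot)=\log(f(t))/s$), the same Pinsker-plus-$t^4\ge f(t)$ argument for Assumption \ref{assUpper} in the \texttt{UCB1} case, and the same substitute-the-formulas-and-strip-the-ceilings argument for Assumption \ref{assConv2}. Your explicit handling of the $t=1$ degeneracy and of the $x$-monotonicity of $g^{\texttt{KL-UCB}}$ is slightly more careful than the paper's "immediate," but it is not a different proof.
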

\begin{proof}
The proof is straightforward and can be found in Appendix \ref{secProofAssumptions} in the supplementary material.
\end{proof}

\section{Algorithm}

\subsection{Message policy} \label{secMessage}

Our message policy is notationally cumbersome, so we begin with an informal description in Figure \ref{figMessage}. Here we depict the policy from the perspective of follower $k \in [K]$. The axis shows time and the stars represent times $t$ at which arm $A_t = k$ is pulled (we skew the axis so these stars are evenly spaced). The middle of the diagram shows how the pulls at top are mapped to the messages $M_{k,s}$ at bottom. The interpretation is as follows:
\begin{itemize}
    \item After its first arm pull, follower $k$ observes reward $X_{k,1}$ and computes a one-bit uniform quantization of the empirical mean $\hat{\mu}_{k,1} = X_{k,1}$ (we formally define this quantization shortly). This bit is then transmitted as the first message $M_{k,1}$.
    \item After its second pull, follower $k$ observes $X_{k,2}$ and quantizes $\hat{\mu}_{k,2} = \frac{ X_{k,1} + X_{k,2} }{2}$ using \textit{two bits}. The first of these bits is transmitted immediately as the message $M_{k,2}$; the second bit is transmitted as $M_{k,3}$ in the time slot of the third pull. 
    Moving forward, we refer to $(M_{k,2} , M_{k,3})$ (and analogous sequences) as \textit{packets}.
    \item Similarly, the next packet $( M_{k,4}, M_{k,5} )$ is a two-bit quantization of $\hat{\mu}_{k,4} = \frac{ \sum_{s=1}^4 X_{k,s} }{4}$. The next four packets start at the time slots of pulls $6$, $9$, $12$, and $15$; each contains a \textit{three-bit} quantization of the corresponding empirical mean. Thereafter, packet length increases to four bits. In general, packet length grows as follows: $j$-bit packets are transmitted $2^{j-1}$ times, and then packet length increases to $j+1$ bits.
\end{itemize}

\begin{figure}
\centering
\includegraphics[width=\textwidth]{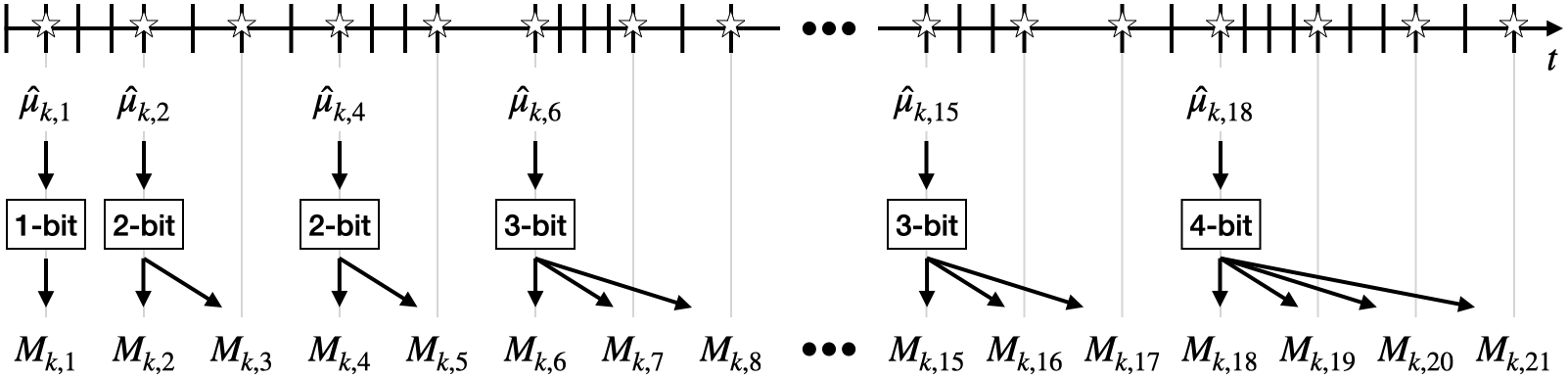}
\caption{Depiction of message policy from perspective of follower $k$.} \label{figMessage}
\end{figure}

As discussed in Section \ref{secIntro}, the key feature of this policy is that packet length increases, and thus quantization error decreases, as the number of pulls grows. The precise packet length schedule shown in Figure \ref{figMessage} ensures that the $i$-th packet contains $\Theta(\log i)$ bits, which appears to be order-wise optimal in terms of regret (see Remark \ref{remPacketLength}).

\subsubsection{Formal definition}

We now formalize this policy. We begin by defining $\tau, \iota : \Z_+ \rightarrow \Z_+$ by
\begin{gather}
\tau(i)  = 1 + (i+1) \ceil{ \log_2(i+1) } - 2^{ \ceil{ \log_2 (i+1) } }\ \forall\ i \in \Z_+ , \label{eqDefnTau} \\
\iota ( s ) = \max \{ i \in \Z_+ : \tau(i) \leq s \}\ \forall\ s \in \Z_+ . \label{eqDefnIota}
\end{gather}
Here $\tau(i)$ represents the \textit{number of pulls at which the $i$-th packet completes transmission} and defines the packet length schedule discussed above. Accordingly, $\iota(s)$ represents \textit{the number of packets transmitted after $s$ pulls}. Next, for $\alpha \in \Z_+$ we define
\begin{equation}\label{eqQuantDefn}
\Gamma_{\alpha} ( x ) = \ceil{ 2^{\alpha} x } - 1\ \forall\ x \in (0,1] , \quad \Gamma_{\alpha}(0) = 0 .
\end{equation}
Finally, we define a binary variable, i.e., the message $M_{k,s}$ transmitted by follower $k$ after its $s$-th arm pull by
\begin{equation}\label{eqMessageDefn}
M_{k,s} = \Gamma_{ s - \tau(\iota(s-1))  } ( \hat{\mu}_{k, 1 +\tau(\iota(s-1)  ) } )  - 2 \Gamma_{ s - \tau(\iota(s-1)) - 1 } ( \hat{\mu}_{k, 1 +\tau(\iota(s-1)  ) }   ) .
\end{equation}

\begin{rem}
Note $1 + \tau(\iota(s-1)) \leq s$ by \eqref{eqDefnIota}, so \eqref{eqMessageDefn} is feasible, in the sense that it is transmitted after the $s$-th arm pull and only depends on $\{ X_{k,s'} \}_{s'=1}^{ 1 + \tau(\iota(s-1)) } \subset \{ X_{k,s'} \}_{s'=1}^s$.
\end{rem}

\begin{rem}
The form of \eqref{eqMessageDefn} is motivated by the following identity: for $\alpha \in \N , x \in [0,1]$,
\begin{equation}\label{eqDecoderInformal}
\sum_{s=1}^{\alpha} 2^{\alpha-s} ( \Gamma_s (x) - 2 \Gamma_{s-1}(x) ) = \Gamma_{\alpha} ( x ) - 2^{\alpha} \Gamma_0(x) = \Gamma_{\alpha}(x) ,
\end{equation}
where we evaluated a telescoping sum and used $\Gamma_0(x) = 0$ by definition. In other words, using bits of the form $\Gamma_s(x) - 2 \Gamma_{s-1}(x)$ (i.e., the form of \eqref{eqMessageDefn}),
one can recover $\Gamma_{\alpha}(x)$, from which  the following quantized estimate of $x$ can be derived:
\begin{equation}\label{eqDecoderInformal2}
2^{-\alpha} ( \Gamma_{\alpha}(x) + 1 ) = 2^{-\alpha} \ceil{ 2^{\alpha} x }  \in [ x , x + 2^{-\alpha} ] ,
\end{equation}
i.e., we round $x$ up to the nearest element of $\{ 2^{-\alpha} i \}_{i \in [2^{\alpha}] }$ (see Remark \ref{remRoundUp}). The additional notation in \eqref{eqMessageDefn} maps the current pull $s$ to the appropriate empirical mean $\hat{\mu}_{k,1+\tau(\iota(s-1))}$ and to the proper bit of the encoding $s-\tau(\iota(s-1))$. %; see Figure \ref{figIotaTau1}.
\end{rem}

% \begin{figure}
% \centering
% \includegraphics[height=1.3in]{message}
% \caption{$\tau(\iota(s-1))+1$ maps the number of pulls $s$ to the first pull of the current packet; shown are the cases $s = \tau(\iota(s))$ (left) and $s \neq \tau(\iota(s))$ (right). Here each tick represents an arm pull and the boxes represent packets (with text showing packet index).} \label{figIotaTau1}
% \end{figure}

The proposed message policy is summarized in Algorithm \ref{algMessage}.

{\linespread{1}
\begin{algorithm} \label{algMessage}

\caption{Message policy $\pi^m$ (at follower $k$)}

Set $T_k(0) = 0$

\For{$t \in \N$}{

\uIf{$A_t = k$}{

Set $T_k(t) = T_k(t-1)+1$, observe $X_{k, T_k(t) }$, transmit $M_{k,T_k(t)}$ (defined in \eqref{eqMessageDefn})

}
\Else{

Set $T_k(t) = T_k(t-1)$

}

}

\end{algorithm}}

\subsection{Arm policy} \label{secArm}

To define the arm policy, we first specify how the leader decodes the messages \eqref{eqMessageDefn}. First, for each $k \in [K], s \in \N$, we define
\begin{equation}\label{eqDefnMuBar}
\bar{\mu}_{k,s} = \sum_{ s' = 1 + \tau( \iota(s) - 1) }^{ \tau(\iota(s)) } 2^{ \tau( \iota(s) - 1) - s'  } M_{k,s'}   + 2^{ \tau(\iota(s)-1) - \tau(\iota(s)) } .
\end{equation}
Recall $\tau(\iota(s)) \leq s$ by \eqref{eqDefnIota}, so after receiving $s$ messages from follower $k$, the leader can compute \eqref{eqDefnMuBar}. In particular, \textit{the leader can compute $\bar{\mu}_{k,T_k(t-1)}$ before its $t$-th broadcast}. Analogous to \eqref{eqDecoderInformal} and \eqref{eqDecoderInformal2}, we can bound the quantization error in \eqref{eqDefnMuBar}.
\begin{prop} \label{propMessageAccuracy}
Let $s \in \N$, $\alpha(s) = \tau ( \iota(s) ) - \tau ( \iota(s) - 1 )$ and $\eta(s) = 1 + \tau ( \iota(s) - 1 )$. Then
\begin{equation}
\bar{\mu}_{k,s} = \begin{cases} 2^{-\alpha(s)} \ceil{ 2^{\alpha(s)} \hat{\mu}_{k,\eta(s)} } , & \hat{\mu}_{k,\eta(s)} \in (0,1] \\ 2^{-\alpha(s)}, & \hat{\mu}_{k,\eta(s)} = 0 \end{cases}  \in [ \hat{\mu}_{k,\eta(s)} , \hat{\mu}_{k,\eta(s)} + 2^{-\alpha(s)} ] .
\end{equation}
\end{prop}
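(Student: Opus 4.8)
The plan is to unwind the decoder \eqref{eqDefnMuBar}, substitute the message definition \eqref{eqMessageDefn}, and collapse the result with the telescoping identity \eqref{eqDecoderInformal}. The first and main task is bookkeeping with the index functions $\tau,\iota$. From \eqref{eqDefnTau} a short computation gives $\tau(i+1)-\tau(i) = \ceil{\log_2(i+2)} \ge 1$ for all $i \in \Z_+$ (this is the packet-length schedule of Figure~\ref{figMessage}), so $\tau$ is strictly increasing; together with $\tau(1)=1$ and the monotonicity of $\iota$ inherited from \eqref{eqDefnIota}, this yields $\iota(s)\ge 1$ for every $s\ge 1$, so that $\eta(s) = 1+\tau(\iota(s)-1)$ and $\alpha(s) = \tau(\iota(s))-\tau(\iota(s)-1) \ge 1$ are well-defined. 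Now fix $s \in \N$ and any index $s'$ in the summation range of \eqref{eqDefnMuBar}, i.e.\ $1+\tau(\iota(s)-1) \le s' \le \tau(\iota(s))$. Then $\tau(\iota(s)-1) \le s'-1 \le \tau(\iota(s))-1 < \tau(\iota(s))$, so by \eqref{eqDefnIota} and strict monotonicity of $\tau$ we obtain $\iota(s'-1) = \iota(s)-1$; hence $1+\tau(\iota(s'-1)) = \eta(s)$, and \eqref{eqMessageDefn} reads $M_{k,s'} = \Gamma_{s'-\tau(\iota(s)-1)}(\hat{\mu}_{k,\eta(s)}) - 2\Gamma_{s'-\tau(\iota(s)-1)-1}(\hat{\mu}_{k,\eta(s)})$. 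In particular every message appearing in the sum encodes a bit of the \emph{same} quantity $\hat{\mu}_{k,\eta(s)}$, and as $s'$ runs over the summation range the integer $j := s'-\tau(\iota(s)-1)$ runs bijectively over $\{1,\dots,\alpha(s)\}$.

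Writing $x := \hat{\mu}_{k,\eta(s)}$ and $\alpha := \alpha(s)$, I will then re-index \eqref{eqDefnMuBar} by $j = s'-\tau(\iota(s)-1)$, which turns the coefficient $2^{\tau(\iota(s)-1)-s'}$ into $2^{-j}$ and the additive term into $2^{-\alpha}$, giving
\[
\bar{\mu}_{k,s} = \sum_{j=1}^{\alpha} 2^{-j}\bigl( \Gamma_j(x) - 2\Gamma_{j-1}(x) \bigr) + 2^{-\alpha} .
\]
Multiplying by $2^{\alpha}$ and invoking \eqref{eqDecoderInformal} (legitimate since $\alpha \ge 1$), with $\Gamma_0 \equiv 0$ from \eqref{eqQuantDefn}, gives $2^{\alpha}\bar{\mu}_{k,s} = \Gamma_{\alpha}(x) + 1$. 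It then remains to plug in the definition \eqref{eqQuantDefn} of $\Gamma_{\alpha}$: if $x \in (0,1]$ then $\Gamma_{\alpha}(x) = \ceil{2^{\alpha}x}-1$, so $\bar{\mu}_{k,s} = 2^{-\alpha}\ceil{2^{\alpha}x}$, whereas if $x=0$ then $\Gamma_{\alpha}(x)=0$, so $\bar{\mu}_{k,s} = 2^{-\alpha}$ — exactly the asserted case split. The stated containment $\bar{\mu}_{k,s} \in [x,\, x+2^{-\alpha}]$ follows from $y \le \ceil{y} < y+1$ applied with $y = 2^{\alpha}x$ in the first case, and trivially from $2^{-\alpha} \in [0, 2^{-\alpha}]$ in the second.

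The genuine difficulty is confined to the first paragraph: verifying that $\iota(s'-1) = \iota(s)-1$ holds uniformly across one packet and that the bit positions $s'-\tau(\iota(s'-1))$ sweep out $1,\dots,\alpha(s)$; once the indices are aligned, the remainder is exactly the telescoping sum \eqref{eqDecoderInformal}. One should also record $\tau(\iota(s)) \le s$ (immediate from \eqref{eqDefnIota}), so that the messages $M_{k,s'}$ entering the sum have indeed all been received by the leader, as claimed after \eqref{eqDefnMuBar}.
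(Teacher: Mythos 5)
Your proposal is correct and follows essentially the same route as the paper's proof in Appendix \ref{secProofMessageAccuracy}: show $\iota(s'-1)=\iota(s)-1$ for every $s'$ in the packet so that all messages encode bits of the same $\hat{\mu}_{k,\eta(s)}$, collapse the sum via the telescoping identity \eqref{eqDecoderInformal}, and read off the definition of $\Gamma_{\alpha}$. The extra bookkeeping you include (strict monotonicity of $\tau$, $\iota(s)\geq 1$, $\alpha(s)\geq 1$) is a harmless tightening of details the paper leaves implicit.
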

\begin{proof}
The result follows from the logic of \eqref{eqDecoderInformal} and \eqref{eqDecoderInformal2}. See Appendix \ref{secProofMessageAccuracy} for details.
\end{proof}

% \begin{figure}
% \centering
% \includegraphics[height=1.3in]{arm}
% \caption{$\eta(s)$ and $\alpha(s)$, respectively, map the current number of pulls $s$ to the number of pulls at start of most recently transmitted packet and length of this packet, respectively.} \label{figIotaTau2}
% \end{figure}

In light this, we define our arm policy (Algorithm \ref{algArm}): at $t \in [K]$, the leader broadcasts $A_t = t$ (arms are played in round robin); at $t \in \{K+1,K+2,\ldots,\}$, the leader broadcasts
\begin{equation}\label{eqArmPolicy}
A_t = \argmax_{ k \in [K] } g_t ( \bar{\mu}_{k, T_k(t-1)} , \eta(T_k(t-1)) ) .
\end{equation}
Thus, the leader replaces the average reward with the estimate \eqref{eqDefnMuBar}, and the number of pulls by the number of samples $\eta(T_k(t-1))$ comprising this estimate.

We return to comment on the functions $\tau(\iota(\cdot))$, $\alpha(\cdot)$, and $\eta(\cdot)$ in Proposition \ref{propMessageAccuracy}. First, by the earlier interpretations of $\tau$ and $\iota$, $\tau(\iota(\cdot))$ represents \textit{the number of pulls at which a packet most recently completed transmission}. Consequently, $\alpha(\cdot)$ is \textit{the length of the most recent packet}, and $\eta(\cdot)$ is \textit{the number of samples for the quantized estimate contained in this packet}. By these interpretations, $\alpha(\cdot)$ and $\eta(\cdot)$ dictate quantization and sampling error, respectively. From the chosen packet length schedule, one can easily show $\alpha(s) = \Theta ( \log s )$ and $\eta(s) = s - \Theta ( \alpha(s) )$. More precisely, we have the following.
\begin{prop} \label{propAlphaPsiMain}
Let $u \geq 16$ and $s \geq u + 3 \log_2 u$. Then $\eta(s) \geq \frac{s}{2} \vee u$ and $\alpha(s) \geq \log_2 \sqrt{s}$.
\end{prop}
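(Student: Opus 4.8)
The plan is to first extract the relevant structure of $\tau$, $\iota$, $\alpha$, and $\eta$ from their definitions, and then deduce both inequalities by elementary estimates. The key structural fact is the telescoping identity $\tau(i) - \tau(i-1) = \ceil{\log_2(i+1)}$ for every $i \in \N$, which I would verify directly from \eqref{eqDefnTau} by separating the case where $i+1$ is a power of $2$ from the case where it is not; intuitively this just says the $i$-th packet occupies $\ceil{\log_2(i+1)}$ pulls. Summing it gives $\tau(i) = \sum_{\ell=1}^i \ceil{\log_2(\ell+1)}$, so $\tau$ is strictly increasing on $\Z_+$, $\tau(i) \geq i$, and $\tau(2^j) \leq 2^j(j+1) \leq 4^j$ for every $j \geq 1$ (each of the $2^j$ summands is at most $j+1$, and $j+1 \leq 2^j$). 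The same identity yields $\alpha(s) = \tau(\iota(s)) - \tau(\iota(s)-1) = \ceil{\log_2(\iota(s)+1)}$, and note $\iota(s) \geq 1$ throughout the stated range.

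For the bound on $\alpha$, write $j = \alpha(s)$; then $\ceil{\log_2(\iota(s)+1)} = j$ forces $2^{j-1} \leq \iota(s) \leq 2^j - 1$. Since $\tau$ is increasing with $\tau(i) \geq i$ and $\iota(s) = \max\{i : \tau(i) \leq s\}$, the lower bound on $\iota(s)$ gives $s \geq \tau(\iota(s)) \geq \tau(2^{j-1}) \geq 2^{j-1}$, while the upper bound gives $s < \tau(\iota(s)+1) \leq \tau(2^j) \leq 4^j$. Hence $j - 1 \leq \log_2 s < 2j$: the right half is $\alpha(s) > \tfrac12\log_2 s = \log_2\sqrt s$ (the second claim, which in fact holds for all $s \geq 1$), and the left half is the sharp auxiliary bound $\alpha(s) \leq 1 + \log_2 s$ that I need below.

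For the bound on $\eta$, maximality of $\iota(s)$ gives $\tau(\iota(s)+1) \geq s+1$, and the telescoping identity gives $\tau(\iota(s)+1) - \tau(\iota(s)) = \ceil{\log_2(\iota(s)+2)} \leq 1 + \ceil{\log_2(\iota(s)+1)} = 1 + \alpha(s)$; together these yield $\tau(\iota(s)) \geq s - \alpha(s)$, so
\begin{equation}
\eta(s) = 1 + \tau(\iota(s)-1) = 1 + \tau(\iota(s)) - \alpha(s) \geq s + 1 - 2\alpha(s) \geq s - 1 - 2\log_2 s ,
\end{equation}
the last step using $\alpha(s) \leq 1 + \log_2 s$. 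I then split on whether $s \geq 2u$. If $s \geq 2u$, then $s \geq 32$, and $s - 1 - 2\log_2 s \geq s/2$ is equivalent to $s \geq 2 + 4\log_2 s$, which holds for all $s \geq 32$ since $s \mapsto s - 2 - 4\log_2 s$ is positive at $32$ and increasing beyond it; thus $\eta(s) \geq s/2 = \tfrac{s}{2}\vee u$. If instead $s < 2u$, then $\log_2 s < 1 + \log_2 u$, and using $s \geq u + 3\log_2 u$ together with $u \geq 16$ (so $\log_2 u \geq 4$),
\begin{equation}
\eta(s) \geq s - 1 - 2\log_2 s > (u + 3\log_2 u) - 1 - 2(1 + \log_2 u) = u + \log_2 u - 3 \geq u ,
\end{equation}
so $\eta(s) \geq u = \tfrac{s}{2}\vee u$; combining the two cases completes the argument.

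I expect the only delicate points to be the ceiling bookkeeping in the telescoping identity (one must track the powers-of-two boundaries) and, more substantively, the constant matching in the $\eta$ bound: the hypotheses ``$u \geq 16$'' and ``$s \geq u + 3\log_2 u$'' are tuned precisely so that in the range $u \leq s < 2u$ the slack $3\log_2 u$ beats $2\log_2 s \approx 2\log_2 u$ plus $O(1)$, which is exactly why the sharp estimate $\alpha(s) \leq 1 + \log_2 s$ — rather than merely $\alpha(s) = O(\log s)$ — is required.
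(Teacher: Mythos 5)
Your proof is correct. The skeleton matches the paper's: both arguments rest on the telescoping identity for $\tau$ (your $\tau(i)-\tau(i-1)=\ceil{\log_2(i+1)}$ is the paper's $\tau(i+1)-\tau(i)=1+\floor{\log_2(i+1)}$ from Proposition \ref{propTauRewrite} in disguise) and on the lower bound $\eta(s)\geq s-2\log_2 s-1$, which the paper isolates as Proposition \ref{propPsiLower}. Two sub-arguments are genuinely different, though. For $\alpha(s)\geq\log_2\sqrt{s}$ the paper argues by contradiction through the auxiliary upper bound $\eta(s)\leq 2^{\alpha(s)}\alpha(s)$ (Proposition \ref{propPsiUpper}) combined with $\eta(s)\geq s/2$; you instead write $\alpha(s)=\ceil{\log_2(\iota(s)+1)}=j$, sandwich $2^{j-1}\leq\iota(s)\leq 2^j-1$, and squeeze $2^{j-1}\leq\tau(2^{j-1})\leq s<\tau(2^j)\leq 4^j$. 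This bypasses Proposition \ref{propPsiUpper} entirely, needs no hypothesis on $s$, and yields the sharper two-sided estimate $\log_2\sqrt{s}<\alpha(s)\leq 1+\log_2 s$, whose upper half you then reuse to rederive the $\eta$ lower bound (the paper instead gets it from $\iota(s)\leq s$). For $\eta(s)\geq u$ the paper invokes monotonicity of $\eta$ and evaluates at $s=\floor{u+3\log_2 u}$, whereas your case split on $s\gtrless 2u$ avoids the monotonicity step at the cost of a second elementary calculus check; both routes consume the hypotheses $u\geq 16$ and $s\geq u+3\log_2 u$ in the same way, and your closing remark about why $3\log_2 u$ of slack is exactly what is needed to beat $2\log_2 s+O(1)$ is the right diagnosis.
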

\begin{proof}
The proof is elementary but tedious, so we defer it to Appendix \ref{secProofAlphaPsi}.
\end{proof}

{\linespread{1}\begin{algorithm}

\caption{Arm policy $\pi^a(g)$ with decision functions $g = \{ g_t \}_{t \in \N}$ (at leader)} \label{algArm}

\For{$t \in \{1,\ldots,K\}$}{

Broadcast $A_t = t$, receive $M_{t,1}$ (defined in \eqref{eqMessageDefn})

}

Set $T_k(K) = 1\ \forall\ k \in [K]$

\For{$t \in \{K+1,K+2,\ldots\}$}{

Broadcast $A_t = \argmax_{k \in [K]} g_t ( \bar{\mu}_{k,T_k(t-1)} ,  \eta ( T_k(t-1) ) )$ (defined in \eqref{eqDefnMuBar})

Set $T_{A_t}(t) = T_{A_t}(t-1)+1$, receive $M_{A_t, T_{A_t}(t) }$ (defined in \eqref{eqMessageDefn})

Set $T_k(t) =T_k(t-1)\ \forall\ k \in [K] \setminus \{ A_t \}$

}

\end{algorithm}}

\section{Results}

We now turn to the analysis of our policy. The key idea is that the policy's logarithmically-increasing packet lengths ensure the quantization errors introduced by the one-bit feedback restriction decay at the same rate as the sampling errors of the local arm mean estimates (see Remark \ref{remPacketLength} for more details). Consequently, the leader’s arm mean estimates are just as accurate (in an order sense) as the decision maker’s estimates in the corresponding MAB problem. This will allow us to prove \eqref{eqMainResultIntro}, i.e., that our one-bit feedback policy asymptotically incurs the same regret as the corresponding MAB policy, in Section \ref{secResultsAsymptotic}. We will then derive finite-horizon bounds for the \texttt{UCB1} and \texttt{KL-UCB} variants of our policy in Section \ref{secResultsFinite}.

\subsection{Asymptotic bound for general UCB-based policies} \label{secResultsAsymptotic}

Toward proving \eqref{eqMainResultIntro}, we first bound the regret of our policy. Here the bound is in terms of the function from Definition \ref{defnConv}; as mentioned above, we will later (in Section \ref{secResultsAsymptotic}) derive explicit bounds for \texttt{UCB1} and \texttt{KL-UCB}.

\begin{lem} \label{lemUpperFinite}
Let $g = \{ g_t \}_{t \in \N}$ satisfy Assumptions \ref{assMono}-\ref{assUpper}, and let $\pi^m$ and $\pi^a(g)$ be the policies described in Algorithms \ref{algMessage} and \ref{algArm}, respectively. Then for any $n \in \N$, the regret \eqref{eqRegret} satisfies
\begin{equation}
R_n(\pi^a(g),\pi^m) \leq \sum_{k=2}^K \Delta_k \min_{ \delta \in ( 0 , \frac{\Delta_k}{2} ) }  \Big( S_n ( \mu_k + \delta , \mu_1 - \delta ) + 3 \log_2 \Big( S_n ( \mu_k + \delta , \mu_1 - \delta ) \vee \frac{4}{\delta^2} \Big) + \frac{10}{\delta^2} \Big) .
\end{equation}
\end{lem}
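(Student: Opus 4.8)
The plan is to bound $\E[T_k(n)]$ for each suboptimal arm $k$ via a standard UCB-style decomposition, carefully tracking the two sources of error introduced by our setting: the sampling error of $\hat\mu_{k,\eta(\cdot)}$ (inherited from MAB) and the quantization error $\bar\mu_{k,s} - \hat\mu_{k,\eta(s)} \in [0, 2^{-\alpha(s)}]$ from Proposition \ref{propMessageAccuracy}. Fix $k \in \{2,\ldots,K\}$ and $\delta \in (0, \Delta_k/2)$. The key observation is that, for the leader to broadcast $A_t = k$ at some large time $t$, we must have $g_t(\bar\mu_{k,T_k(t-1)}, \eta(T_k(t-1))) \geq g_t(\bar\mu_{1,T_1(t-1)}, \eta(T_1(t-1)))$. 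I would argue this forces one of a small number of ``bad events'': either (i) the estimate $\bar\mu_{k,T_k(t-1)}$ exceeds $\mu_k + \delta$ (which, decomposed into sampling error plus quantization error, requires either $\hat\mu_{k,\eta}$ to deviate or the quantization error $2^{-\alpha}$ to exceed, say, $\delta/2$), or (ii) the optimal arm is underestimated, $g_t(\bar\mu_{1,T_1(t-1)}, \eta(T_1(t-1))) \leq \mu_1 - \delta$, or (iii) arm $k$ has already been pulled enough times that, even with $\bar\mu_{k,T_k(t-1)} \leq \mu_k + \delta$, monotonicity (Assumption \ref{assMono}) and the definition of $S_n$ (Definition \ref{defnConv}) give $g_t(\bar\mu_{k,\cdot},\eta(\cdot)) \leq \mu_1 - \delta$, contradicting the selection of $k$.

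For the ``enough pulls'' threshold in (iii): if $T_k(t-1)$ is large then $\eta(T_k(t-1))$ is also large (Proposition \ref{propAlphaPsiMain} gives $\eta(s) \geq s/2$ once $s \geq u + 3\log_2 u$), and $g_t$ decreasing in its second argument plus $g_t(\mu_k+\delta, s') \leq \mu_1 - \delta$ for $s' \geq S_n(\mu_k+\delta,\mu_1-\delta)$ (using $g_t \leq g_n$ by Assumption \ref{assMono} and $t \leq n$) closes the loop. So the ``exploitation'' contribution to $\E[T_k(n)]$ is roughly $S_n(\mu_k+\delta,\mu_1-\delta)$ plus the overhead $3\log_2(\cdots)$ needed to convert the bound on $\eta$ into a bound on $T_k$ — this is exactly where the $3\log_2(S_n(\mu_k+\delta,\mu_1-\delta) \vee 4/\delta^2)$ term and the $u$-threshold ($4/\delta^2 \geq 16$ forcing $\delta$ small, handled by the $\vee$) come from. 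The quantization-error part of (i) contributes a bounded number of pulls: $2^{-\alpha(s)} \leq \delta/2$ holds once $\alpha(s) \geq \log_2(2/\delta)$, and $\alpha(s) \geq \log_2\sqrt{s}$ (Proposition \ref{propAlphaPsiMain}) makes this automatic for $s \geq 4/\delta^2$, so at most $O(1/\delta^2)$ pulls incur nonnegligible quantization error. Events (i)-sampling and (ii) I would control by the standard KL-UCB deviation argument enabled by Assumption \ref{assUpper}: the condition $d(x, g_t(x,s)) \geq \log f(t)/s$ is precisely what makes $\sum_{t,s} \P(\text{underestimate optimal arm}) = \sum_t \frac{K}{f(t)} = O(1)$ after a peeling/union bound over $s$, and likewise the sampling-overestimate event for arm $k$ sums to $O(1/\delta^2)$ via a Chernoff bound on $\hat\mu_{k,\eta}$ (here it matters that $\eta(s) \geq s/2$, so the effective sample count is not too degraded). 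Collecting, $\E[T_k(n)] \leq S_n(\mu_k+\delta,\mu_1-\delta) + 3\log_2(S_n(\cdots) \vee 4/\delta^2) + 10/\delta^2$, and minimizing over $\delta$ and multiplying by $\Delta_k$ gives the claim.

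The main obstacle I anticipate is bookkeeping, not ideas: cleanly separating quantization error from sampling error throughout (since $\bar\mu$ is the only quantity the leader sees, but deviation bounds live on $\hat\mu$), and correctly translating bounds stated in terms of $\eta(T_k(t-1))$ — the sample count — back into bounds on $T_k(t-1)$ itself, which is what appears in the regret. The round-robin initialization and the fact that $g_t$ compares arms with different $\eta$-values (so one cannot naively apply a fixed threshold) require care; I expect to fix a deterministic threshold $m_k \approx S_n(\mu_k+\delta,\mu_1-\delta)$ on $\eta(T_k(t-1))$, show $\{A_t = k, \eta(T_k(t-1)) \geq m_k\}$ forces event (i) or (ii), and then bound $\sum_t \1\{A_t=k\}$ by $\tau(\iota^{-1}(m_k)) + (\text{sum of bad-event indicators})$, absorbing the $\tau \circ \iota$-to-sample-count conversion into the logarithmic term via Proposition \ref{propAlphaPsiMain}. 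The union bound over the (unboundedly many) values of $s$ for the optimal-arm underestimate is handled by the $f(t) = 1 + t(\log t)^2$ choice, whose summability is the reason this particular $f$ appears in Assumption \ref{assUpper}.
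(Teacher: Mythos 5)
Your proposal follows essentially the same route as the paper's proof: decompose $\E[T_k(n)]$ into a deterministic pull threshold $u + 3\log_2 u$ with $u = S_n(\mu_k+\delta,\mu_1-\delta)\vee 4/\delta^2$ (using Propositions \ref{propMessageAccuracy} and \ref{propAlphaPsiMain} to control quantization error and to translate between $\eta(T_k)$ and $T_k$), a Chernoff-controlled overestimation event for arm $k$ summing to $4/\delta^2$ via $\eta(s)\geq s/2$, and an underestimation event for arm $1$ reduced to the true empirical mean via the round-up property $\bar{\mu}_{1,\cdot}\geq\hat{\mu}_{1,\cdot}$. The bookkeeping you anticipate is exactly what the paper does, and your constants match.

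The one step that would fail as literally written is your claim that the optimal-arm underestimation event is controlled by ``$\sum_{t,s}\P(\cdot)=\sum_t K/f(t)=O(1)$ after a peeling/union bound over $s$.'' With $f(t)=1+t(\log t)^2$, a union bound over the up to $n$ values of $s$ gives $t/f(t)\approx 1/(\log t)^2$ per time step, and even peeling only improves this to roughly $\log(t)/f(t)\approx 1/(t\log t)$; in either case the sum over $t$ diverges (this is why \texttt{KL-UCB} analyses that rely purely on $f$-summability take $f(t)\sim t(\log t)^c$ with $c\geq 3$). The paper instead exploits the $\delta$ slack you have available in your event (ii): by the chain rule for KL divergence (Proposition \ref{propKL}), $g_t(\hat{\mu}_{1,s},s)<\mu_1-\delta$ forces $d(\hat{\mu}_{1,s},\mu_1)\geq\log f(t)/s+2\delta^2$, and the extra $2\delta^2$ produces geometric decay in $s$, making the inner sum $O(1/\delta^2)$ uniformly in $t$ and yielding $\E\sigma_1\leq 2/\delta^2$ (this is the $\delta$-dependent piece of your $10/\delta^2$). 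Since your bad event already carries the $-\delta$ margin, the fix is available within your framework, but the justification you gave for summability is not the one that works here.
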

\begin{proof}[Proof sketch]
The full proof can be found in Appendix \ref{secProofUpperFinite}, where we prove a slightly stronger result: for any arm $k \geq 2$ and any $\delta \in ( 0 , \frac{\Delta_k}{2} )$,
\begin{equation}\label{eqNumPlaysMain}
\E [ T_k(n) ] \leq S_n ( \mu_k + \delta , \mu_1 - \delta ) \vee \frac{4}{\delta^2} + 3 \log_2 \left( S_n ( \mu_k + \delta , \mu_1 - \delta  ) \vee \frac{4}{\delta^2} \right) + \frac{6}{\delta^2} .
\end{equation}

The proof of \eqref{eqNumPlaysMain} is loosely based on the analysis of \texttt{KL-UCB} presented in \cite[Chapter 10]{lattimore2020bandit}. First, we set $u = S_n ( \mu_k + \delta , \mu_1 - \delta ) \vee 4 \delta^{-2}$ and assume arm $k$ has been pulled $s_k = u + 3 \log_2 u$ times. Then by Propositions \ref{propMessageAccuracy} and \ref{propAlphaPsiMain},
\begin{equation}\label{eqMuKEtaKMain}
\bar{\mu}_{k,s_k} - \hat{\mu}_{k,\eta(s_k)} \leq 2^{-\alpha(s_k)} \leq 1 / \sqrt{s_k} \leq \delta / 2 , \quad \eta(s_k) \geq u \geq S_n ( \mu_k + \delta , \mu_1 - \delta ) .
\end{equation}
Thus, assuming $\hat{\mu}_{k,\eta(s_k)} \leq \mu_k + \delta / 2$, we obtain $\bar{\mu}_{k,s_k} \leq \mu_k + \delta$. Combined with the inequality for $\eta(s_k)$ in \eqref{eqMuKEtaKMain}, and using Assumption \ref{assMono} and Definition \ref{defnConv}, we conclude
\begin{equation}\label{eqArmKkey}
g_t ( \bar{\mu}_{k,s_k} , \eta ( s_k ) ) \leq g_n ( \mu_k + \delta , S_n ( \mu_k + \delta , \mu_1 - \delta ) ) \leq \mu_1 - \delta .
\end{equation}
On the other hand, if arm $1$ is pulled $s_1$ times and $\hat{\mu}_{1,\eta(s_1)} > \mu_1 - \delta$, then $\bar{\mu}_{1,s_1} > \mu_1 - \delta$ by Proposition \ref{propMessageAccuracy}, so by Assumptions \ref{assMono} and \ref{assUpper},
\begin{equation}\label{eqArm1key}
g_t ( \bar{\mu}_{1,s_1} , \eta(s_1) ) \geq g_t ( \mu_1 - \delta , \eta(s_1) ) > \mu_1 - \delta .
\end{equation}

To summarize, we have argued the following: if arm $k$ has been pulled $u + 3 \log_2 u$ times, and if $\hat{\mu}_{k,\eta(s_k)} \leq \mu_k + \delta/2$ and $\hat{\mu}_{1,\eta(s_1)} > \mu_1 - \delta$, then arm $1$'s index must exceed arm $k$'s index (see \eqref{eqArmKkey} and \eqref{eqArm1key}), so no additional pulls of arm $k$ will occur. The first two terms in \eqref{eqNumPlaysMain} account for these $u + 3 \log_2 u$ pulls, and the final term accounts for the low probability failure of the event $\{ \hat{\mu}_{k,\eta(s_k)} \leq \mu_k + \delta/2 , \hat{\mu}_{1,\eta(s_1)} > \mu_1 - \delta \}$. More precisely, $\hat{\mu}_{k,\eta(s_k)} \approx \mu_k$ with high probability since we assumed a lower bound on $s_k$ above. In contrast, we made no such assumption on $s_1$, so $\hat{\mu}_{1,\eta(s_1)}$ may be far from $\mu_1$ for small $s_1$. This is why Assumption \ref{assUpper} is needed: it ensures that when $s_1$ is small, arm $1$'s index is large (even if $\hat{\mu}_{1,\eta(s_1)}$ is small).
\end{proof}

\begin{rem} \label{remRoundUp}
Before \eqref{eqArm1key}, we used the fact that the quantizer rounds up, i.e., $\bar{\mu}_{1,\eta(s_1)} \geq \hat{\mu}_{1,\eta(s_1)}$. If instead it rounded down, the analogue of  Proposition \ref{propMessageAccuracy} would imply $\bar{\mu}_{1,\eta(s_1)} \geq \hat{\mu}_{1,\eta(s_1)} - 2^{-\alpha(s_1)}$, which is too loose in the case of small $s_1$ discussed at the end of the proof sketch. Thus, the fact that our quantizer rounds up appears to be an artifact of our analysis.
\end{rem}

\begin{rem} \label{remPacketLength}
As shown in the proof sketch, our packet length schedule ensures the quantization error $|\bar{\mu}_{k,s_k} - \hat{\mu}_{k,\eta(s_k)}|$ is of order $\delta$, the same as the sampling error $|\hat{\mu}_{k,\eta(s_k)} - \mu_k|$. This means that packet length grows as slowly as possible without introducing additional errors (in an order sense) due to quantization. Slow growth of the packet length is desirable for the following reason: if instead packet length grew as, e.g., $\alpha(s_k) = poly(s_k)$, we would need $s_k = u + poly ( u )$ to obtain the second inequality in \eqref{eqMuKEtaKMain}; this would change the $\log (  S_n ( \mu_k + \delta , \mu_1 - \delta) )$ term in Lemma \ref{lemUpperFinite} to $poly ( S_n ( \mu_k + \delta , \mu_1 - \delta) )$, increasing regret in an order sense. 
\end{rem}

We next state a lower bound for regret in the standard MAB setting. Namely, we bound $\tilde{R}_n(g)$, which (we recall) is the regret incurred when using decision function $g$ for $n$ arm pulls with full reward observations (see \eqref{eqRegretMAB}).
\begin{lem} \label{lemLower}
Let $g = \{ g_t \}_{t \in \N}$ satisfy Assumptions \ref{assMono}-\ref{assConv1}. Then for any $\delta \in (0, \mu_K \wedge 1-\mu_1 )$ independent of $n$, the regret \eqref{eqRegretMAB} satisfies
\begin{equation}
\liminf_{n \rightarrow \infty} \frac{\tilde{R}_n(g) }{\sum_{k=2}^K \Delta_k S_{ \ceil{(1-\delta)n} } ( \mu_k - \delta , \mu_1 + \delta  ) } \geq 1 .
\end{equation}
\end{lem}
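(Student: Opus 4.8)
I would prove the stronger statement that, for the fixed $\delta$ in the hypothesis, there is an event $\mathcal{G}_n$ with $\P_g(\mathcal{G}_n) \to 1$ on which $\tilde{T}_k(n) \geq s_k$ holds \emph{simultaneously for all} $k \in \{2,\ldots,K\}$, where $s_k := S_{\lceil (1-\delta)n\rceil}(\mu_k-\delta,\mu_1+\delta) \in \N$ (finite by Definition \ref{defnConv} and Assumption \ref{assConvSimple}). Granting this, $\tilde{R}_n(g) = \sum_{k=2}^K \Delta_k \E_g[\tilde{T}_k(n)] \geq \P_g(\mathcal{G}_n)\sum_{k=2}^K \Delta_k s_k$, so dividing by $\sum_{k=2}^K \Delta_k s_k$ and letting $n\to\infty$ gives the lemma. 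Set $t_0 = \lceil(1-\delta/2)n\rceil$, $D = \max_{x\in[0,1]} d(x,\mu_1+\delta/2) \in (0,\infty)$, and $M = \log(f(t_0))/D$ (so $M = \Theta(\log n)\to\infty$). I define $\mathcal{G}_n$ as the intersection of: (i) $\{\tilde{T}_1(n)\geq t_0\}$; (ii) $\{\hat{\mu}_{1,s} \leq \mu_1 + \delta/4$ for every integer $s$ with $(1-\delta)n-1\leq s\leq n\}$; and, for each $k\geq 2$, (iii$_k$) $\{\hat{\mu}_{k,s}\geq \mu_k-\delta$ for every integer $s$ with $M\leq s\leq s_k\}$.

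That $\P_g(\mathcal{G}_n)\to1$ is routine: for (i), Markov's inequality gives $\P_g(\tilde{T}_1(n)<t_0) \leq \frac{2}{\delta n}\E_g[\sum_{k\geq2}\tilde{T}_k(n)]\leq \frac{2\tilde{R}_n(g)}{\delta\Delta_2 n}$, and $\tilde{R}_n(g)=o(n)$ because the MAB specialization of Lemma \ref{lemUpperFinite} (obtained by setting the quantization error to zero; cf.\ \cite[Chapter 10]{lattimore2020bandit}) together with Assumption \ref{assConv1} yields $\E_g[\tilde{T}_k(n)] = o(n)$ for each $k$; for (ii) and (iii$_k$), Hoeffding's inequality and a geometric union bound give failure probabilities $O(e^{-c\delta^2 n})$ and $O(e^{-2M\delta^2}) = O(f(t_0)^{-2\delta^2/D})$, both $\to0$.

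The heart of the argument is that $\mathcal{G}_n$ forces $\tilde{T}_k(n)\geq s_k$ for all $k\geq2$ once $n$ is large. On (i), arm $1$ is pulled at some time in $\{t_0,\ldots,n\}$ (else $\tilde{T}_1(n)=\tilde{T}_1(t_0-1)\leq t_0-1$, contradicting (i)); let $t^\star$ be the first such time. Since $\tilde{T}_1$ is nondecreasing and $t^\star\geq t_0$, $\tilde{T}_1(t^\star-1)\geq \tilde{T}_1(n)-(n-t^\star+1)\geq 2t_0-n-1\geq (1-\delta)n-1$. Using (ii), monotonicity of $g$ in its first argument (Assumption \ref{assMono}), Definition \ref{defnConv}, and $S_{t^\star}(\mu_1+\delta/4,\mu_1+\delta/2)\leq S_n(\mu_1+\delta/4,\mu_1+\delta/2)=o(n)\leq(1-\delta)n-1$ for large $n$ (Assumption \ref{assConv1}), arm $1$'s index at $t^\star$ is $g_{t^\star}(\hat{\mu}_{1,\tilde{T}_1(t^\star-1)},\tilde{T}_1(t^\star-1))\leq g_{t^\star}(\mu_1+\delta/4,\tilde{T}_1(t^\star-1))\leq \mu_1+\delta/2$. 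As arm $1$ attains the maximum index at $t^\star$, every $k\geq2$ has index $\leq\mu_1+\delta/2$ there. Suppose toward a contradiction that $\tilde{T}_k(t^\star-1)<s_k$ for some such $k$. If $\tilde{T}_k(t^\star-1)<M$: when $\hat{\mu}_{k,\tilde{T}_k(t^\star-1)}\geq\mu_1+\delta/2$, arm $k$'s index exceeds $\mu_1+\delta/2$ directly since $g_{t^\star}(x,s)>x$ by Assumption \ref{assUpper}; when $\hat{\mu}_{k,\tilde{T}_k(t^\star-1)}<\mu_1+\delta/2$, if the index were $\leq\mu_1+\delta/2<1$ then Assumption \ref{assUpper} would give $\tilde{T}_k(t^\star-1)\geq\log(f(t^\star))/d(\hat{\mu}_{k,\cdot},\text{index})\geq\log(f(t_0))/D=M$ (using $d(\hat\mu_{k,\cdot},\cdot)\leq D$ and $f$ increasing), a contradiction. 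If $M\leq\tilde{T}_k(t^\star-1)<s_k$: (iii$_k$) gives $\hat{\mu}_{k,\tilde{T}_k(t^\star-1)}\geq\mu_k-\delta>0$, and since $t^\star\geq t_0\geq\lceil(1-\delta)n\rceil$ implies $S_{t^\star}(\mu_k-\delta,\mu_1+\delta)\geq s_k>\tilde{T}_k(t^\star-1)$, Assumption \ref{assMono} and Definition \ref{defnConv} give arm $k$'s index $\geq g_{t^\star}(\mu_k-\delta,\tilde{T}_k(t^\star-1))>\mu_1+\delta$. In both cases the index exceeds $\mu_1+\delta/2$, contradicting the previous sentence; hence $\tilde{T}_k(t^\star-1)\geq s_k$, and so $\tilde{T}_k(n)\geq s_k$.

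Most of the remaining work is bookkeeping: keeping track of the three slack levels ($\delta/4$ in the Hoeffding event for arm $1$, $\delta/2$ as the index threshold, $\delta$ inside $S_{\lceil(1-\delta)n\rceil}$) so the monotonicity and Kullback--Leibler comparisons chain together, and verifying the ``$n$ large'' quantifiers for the $o(n)$ terms. I expect the main obstacle to be the dichotomy in the last step --- lower bounding arm $k$'s index via the exploration bonus of Assumption \ref{assUpper} when its count is small (below $M$) versus via concentration of $\hat{\mu}_{k,\cdot}$ when its count is moderate (above $M$) --- since this is what pins down the choice $M=\log(f(t_0))/D$ and requires the two failure-probability estimates to both vanish at this $M$. (The corner case $\hat{\mu}_{k,s}=0$, where Assumption \ref{assUpper} is stated only for $x\in(0,1)$, is handled by taking the limit $x\downarrow0$.)
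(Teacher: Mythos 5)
Your proposal is correct, and it proves something slightly stronger than the lemma (a simultaneous per-arm lower bound $\tilde{T}_k(n)\geq s_k$ for all $k$, rather than only the weighted sum), but it assembles the pieces differently from the paper. The core machinery is the same in both: a Hoeffding event keeping $\hat{\mu}_{1,s}$ below $\mu_1+O(\delta)$ for large $s$, a Hoeffding event keeping $\hat{\mu}_{k,s}$ above $\mu_k-\delta$ for moderate $s$, the exploration bonus of Assumption \ref{assUpper} to force arm $k$'s index above the threshold when its count is below $\Theta(\log f(\cdot))$ (your $M$ plays exactly the role of the paper's lower summation limit $\log f(\ceil{(1-\delta)n})/(-\log(1-\mu_1-\delta))$ in bounding $\P(\mathcal{G}_{n,k})$), and Assumption \ref{assConv1} to absorb the $o(n)$ terms; you also share the paper's minor technicality of invoking Assumption \ref{assUpper} at $x=0$. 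The genuine structural difference is how you obtain that arm $1$ has been pulled $\approx n$ times. The paper argues by contrapositive: \emph{on} the bad event $\{\sum_k\Delta_k\tilde{T}_k(n)<\sum_k\Delta_k s_k\}$, Assumption \ref{assConv1} deterministically forces $\sum_{k\geq2}\tilde{T}_k(n)<\delta n-1$, so $\tilde{T}_1(\ceil{(1-\delta)n})\geq(1-2\delta)n$ comes for free, and the contradiction is that arm $1$ is then never pulled on the entire window $\{(1-\delta)n,\ldots,n\}$. You instead want the unconditional high-probability event $\tilde{T}_1(n)\geq(1-\delta/2)n$, which you must buy with Markov's inequality plus $\E_g[\tilde{T}_k(n)]=o(n)$; the latter is not available from Lemma \ref{lemUpperFinite} as stated (that lemma concerns the one-bit policy) and requires re-running its argument in the full-observation setting. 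That step is standard and goes through under Assumptions \ref{assMono}--\ref{assUpper}, but it is an extra proof obligation the paper's conditional structure avoids; what it buys you is the cleaner, localized index comparison at the single time $t^\star$ and the stronger per-arm conclusion.
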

\begin{proof}[Proof sketch]
In Appendix \ref{secProofLower}, we prove the stronger result $\lim_{n \rightarrow \infty} \P ( \mathcal{E}_n ) = 0$, where
\begin{equation}
\mathcal{E}_n = \left\{ \sum_{k=2}^K \Delta_k \tilde{T}_k(n) <  \sum_{k=2}^K \Delta_k S_{{(1-\delta)n}} ( \mu_k - \delta , \mu_1 + \delta  ) \right\}  .
\end{equation}
The proof of $\P ( \mathcal{E}_n ) \rightarrow 0$ proceeds in three steps. We first show $\mathcal{E}_n \subset \mathcal{F}_n \cup \mathcal{G}_n$, where
\begin{gather}
\mathcal{F}_n = \left\{ \max_{s \geq (1-2\delta) } g_n ( \hat{\mu}_{1,s} , s ) > \mu_1 + \delta  \right\} , \\
\mathcal{G}_{n,k} = \left\{ \min_{s \leq S_{ {(1-\delta)n}} (\mu_k-\delta,\mu_1+\delta) } g_{{(1-\delta)n}} ( \hat{\mu}_{k,s},s) \leq \mu_1 + \delta  \right\}  , \quad \mathcal{G}_n = \cup_{k=2}^K \mathcal{G}_{n,k} .
\end{gather}
To do so, we assume instead that $\mathcal{E}_n \cap \mathcal{F}_n^C \cap \mathcal{G}_n^C \neq \emptyset$ and derive a contradiction as follows. First, note $\mathcal{E}_n$ implies $\tilde{T}_k(n) < S_{(1-\delta)n} ( \mu_k - \delta , \mu_1 + \delta )$ for some $k \geq 2$. Combined with $\mathcal{G}_{n,k}^C$ and Assumption \ref{assMono}, this implies that for all $t \geq (1-\delta) n$,
\begin{equation}\label{eqNewEvents1Main}
g_t ( \hat{\mu}_{ k , \tilde{T}_k (t-1) } , \tilde{T}_k (t-1) ) \geq \min_{s \leq S_{ {(1-\delta)n}} (\mu_k-\delta,\mu_1+\delta) } g_{{(1-\delta)n}} ( \hat{\mu}_{k,s},s) > \mu_1 + \delta .
\end{equation} 
On the other hand, $\mathcal{E}_n$ and Assumption \ref{assConv1} imply $\sum_{k=2}^K \tilde{T}_k(n) = o ( n )$, so $\tilde{T}_1 ( (1 - \delta) n ) \geq (1-2\delta)n$ for large $n$. Combined with $\mathcal{F}_n^C$ and Assumption \ref{assMono}, we have, for any $t \geq (1-\delta) n$,
\begin{equation}\label{eqNewEvents2Main}
g_t ( \hat{\mu}_{1, \tilde{T}_1(t-1) } , \tilde{T}_1(t-1) ) \leq \max_{s \geq (1-2\delta) } g_n ( \hat{\mu}_{1,s} , s ) \leq \mu_1 + \delta .
\end{equation}
Comparing \eqref{eqNewEvents1Main} and \eqref{eqNewEvents2Main}, the index of arm $k$ exceeds that of arm $1$ for all $t \geq (1-\delta)n$. Thus, by definition of the policy \eqref{eqTildeAt}, arm $1$ is never pulled between times $(1-\delta)n$ and $n$, which contradicts the bound $\sum_{k=2}^K \tilde{T}_k(n) = o ( n )$ stated above. Therefore, $\mathcal{E}_n \cap \mathcal{F}_n^C \cap \mathcal{G}_n^C$ cannot occur, and subsequently $\mathcal{E}_n  \subset \mathcal{F}_n  \cup \mathcal{G}_n$, for large $n$.

Next, we show $\P ( \mathcal{F}_n ) \rightarrow 0$, which amounts to bounding $\P ( g_n ( \hat{\mu}_{1,s} , s ) > \mu_1 + \delta)$ for $s \geq (1-2\delta) n$. For this, we first note $g_n ( \hat{\mu}_{1,s} , s ) > \mu_1 + \delta$ implies $\hat{\mu}_{1,s} > \mu_1 + \delta / 2$: if not, $g_n ( \mu_1 + \delta / 2 , s ) > \mu_1 + \delta$ by Assumption \ref{assMono}, so $S_n ( \mu_1 + \delta / 2 , \mu_1 + \delta ) > s \geq (1-2\delta)n$ by Definition \ref{defnConv}, which violates Assumption \ref{assConv1} for large $n$. Thus, our task is reduced to bounding $\P ( \hat{\mu}_{1,s} > \mu_1 + \delta / 2 )$, for which we use the Hoeffding bound.
 
Finally, we show  $\P ( \mathcal{G}_{n,k} ) \rightarrow 0$. Here we aim to bound $\P ( g_{{(1-\delta)n}} ( \hat{\mu}_{k,s},s) \leq \mu_1 + \delta )$ for $s \leq S_{(1-\delta)n} ( \mu_k - \delta , \mu_1 + \delta )$. For such $s$, $g_{(1-\delta)n} ( \mu_k - \delta , \mu_1 + \delta ) > \mu_1 + \delta$ by Definition \ref{defnConv}, so $g_{{(1-\delta)n}} ( \hat{\mu}_{k,s},s) \leq \mu_1 + \delta$, which implies  $\hat{\mu}_{k,s} \leq \mu_k - \delta$ (else, we violate Assumption \ref{assMono}). The probability of the latter event can again be bounded via Hoeffding. However, there is one subtlety: the resulting bound is too large for small $s$. We address this using Assumption \ref{assUpper}, similar to how we resolved the small $s_1$ issue in the Lemma \ref{lemUpperFinite} proof sketch.

To summarize, we have argued $\mathcal{E}_n \subset \mathcal{F}_n \cup ( \cup_{k=2}^K \mathcal{G}_{n,k} )$ (for all large $n$), $\P ( \mathcal{F}_n ) \rightarrow 0$, and $\P ( \mathcal{G}_{n,k} ) \rightarrow 0$ (for all $k$). The desired result $\P(\mathcal{E}_n) \rightarrow 0$ follows from the union bound.
\end{proof}

Combining the lemmas and using Assumption \ref{assConv2}, we obtain the following theorem.
\begin{thm}[Main result] \label{thmMain}
Let $g = \{ g_t \}_{t \in \N}$ satisfy Assumptions \ref{assMono}-\ref{assConv2}. Then
\begin{equation}
\limsup_{n \rightarrow \infty} \frac{R_n ( \pi^a(g) , \pi^m )}{ \tilde{R}_n(g) } \leq 1 .
\end{equation}
\end{thm}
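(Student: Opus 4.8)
The plan is to sandwich $R_n(\pi^a(g),\pi^m)/\tilde R_n(g)$ between the upper bound of Lemma \ref{lemUpperFinite} and the lower bound of Lemma \ref{lemLower}, using Assumption \ref{assConv2} to reconcile the two thresholds $S_n(\mu_k+\delta,\mu_1-\delta)$ and $S_{\ceil{(1-\delta)n}}(\mu_k-\delta,\mu_1+\delta)$ appearing in those bounds, and then sending $\delta\downarrow0$. Concretely, I would fix a sequence $\{\delta_i\}_{i\in\N}$ with $\delta_i\downarrow0$ and $0<\delta_i<c_0:=\min\{\mu_K,\,1-\mu_1,\,\Delta_2/2\}$. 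Since $\mu_K$ is the smallest arm mean and $\Delta_2$ the smallest gap, each $\delta_i$ is then simultaneously admissible in Lemma \ref{lemUpperFinite} (which requires $\delta<\Delta_k/2$ for every $k$), in Lemma \ref{lemLower} (which requires $\delta<\mu_K\wedge(1-\mu_1)$), and in Assumption \ref{assConv2} applied to each pair $(x_1,x_2)=(\mu_k,\mu_1)$ with $k\in\{2,\dots,K\}$. For each such $k$ and $i$, that assumption guarantees that $L_{k,i}:=\lim_{n\to\infty}S_n(\mu_k+\delta_i,\mu_1-\delta_i)/S_{\ceil{(1-\delta_i)n}}(\mu_k-\delta_i,\mu_1+\delta_i)$ exists and that $\lim_{i\to\infty}L_{k,i}=1$.

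A preliminary observation I would record is that $S_t^g(x_1,x_2)\to\infty$ as $t\to\infty$ whenever $0<x_1<x_2<1$: if not, $S_t^g(x_1,x_2)\le s_0$ for some $s_0$ and infinitely many $t$, so for each such $t$ some $s\le s_0$ has $g_t(x_1,s)\le x_2<1$, and Assumption \ref{assUpper} then forces $\log(f(t))/s_0\le\log(f(t))/s\le d(x_1,g_t(x_1,s))\le d(x_1,x_2)<\infty$, contradicting $\log(f(t))\to\infty$. In particular, for fixed $i$ both $S_n(\mu_k+\delta_i,\mu_1-\delta_i)$ and $S_{\ceil{(1-\delta_i)n}}(\mu_k-\delta_i,\mu_1+\delta_i)$ tend to $\infty$ as $n\to\infty$. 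Hence the correction terms $3\log_2(S_n(\mu_k+\delta_i,\mu_1-\delta_i)\vee 4\delta_i^{-2})$ and $10\delta_i^{-2}$ in Lemma \ref{lemUpperFinite} are $o\big(S_n(\mu_k+\delta_i,\mu_1-\delta_i)\big)$, and evaluating the bound of Lemma \ref{lemUpperFinite} at the fixed $\delta_i$ (rather than the minimizing $\delta$) gives
\[
R_n(\pi^a(g),\pi^m)\ \le\ (1+o(1))\sum_{k=2}^K\Delta_k\,S_n(\mu_k+\delta_i,\mu_1-\delta_i)
\]
as $n\to\infty$ with $i$ held fixed.

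Now, with $i$ fixed, I would set $D_n:=\sum_{k=2}^K\Delta_k\,S_{\ceil{(1-\delta_i)n}}(\mu_k-\delta_i,\mu_1+\delta_i)>0$ and split $R_n/\tilde R_n=(R_n/D_n)(D_n/\tilde R_n)$. Lemma \ref{lemLower} gives $\limsup_n D_n/\tilde R_n=1/\liminf_n(\tilde R_n/D_n)\le 1$. For the other factor, fix $\epsilon>0$; since for each $k$ we have $S_n(\mu_k+\delta_i,\mu_1-\delta_i)\le(L_{k,i}+\epsilon)\,S_{\ceil{(1-\delta_i)n}}(\mu_k-\delta_i,\mu_1+\delta_i)$ for all large $n$, and there are only finitely many $k$, multiplying by $\Delta_k$, summing over $k$, and combining with the displayed upper bound on $R_n$ yields $R_n\le(1+o(1))(\max_{2\le k\le K}L_{k,i}+\epsilon)\,D_n$; letting $n\to\infty$ and then $\epsilon\downarrow0$ gives $\limsup_n R_n/D_n\le\max_{2\le k\le K}L_{k,i}$. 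Since both factors are nonnegative with finite limsup, $\limsup_n R_n(\pi^a(g),\pi^m)/\tilde R_n(g)\le\max_{2\le k\le K}L_{k,i}$. The left-hand side is independent of $i$, so letting $i\to\infty$ and using $\max_{2\le k\le K}L_{k,i}\to1$ (a maximum over finitely many indices each tending to $1$) completes the argument.

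I expect the write-up to be short — it is essentially a bookkeeping combination of the two lemmas — and the only delicate points are the order of the limits (one must let $n\to\infty$ with $\delta_i$ held fixed and only afterwards let $i\to\infty$, which is precisely the iterated-limit form built into Assumption \ref{assConv2}) and the passage from the arm-by-arm ratio asymptotics to an asymptotic for the ratio of the sums $\sum_k\Delta_k(\cdot)$, which is handled uniformly through the finite maximum $\max_{2\le k\le K}L_{k,i}$.
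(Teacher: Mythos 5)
Your proposal is correct and follows essentially the same route as the paper's proof: bound $R_n$ via Lemma \ref{lemUpperFinite} at a fixed admissible $\delta_i$, show the sublogarithmic correction terms are negligible because $S_n(\mu_k+\delta_i,\mu_1-\delta_i)\to\infty$ (the paper derives this from Assumption \ref{assUpper} just as you do), divide by the Lemma \ref{lemLower} benchmark, and invoke Assumption \ref{assConv2} with the iterated limit $n\to\infty$ then $i\to\infty$. The only differences are cosmetic bookkeeping — you bound the ratio of sums by $\max_k L_{k,i}$ and multiply limsups, whereas the paper uses an explicit $\frac{1+\delta}{1-\delta}\bigl(1+\sum_k(\text{ratio}-1)\bigr)$ factorization — and both are valid.
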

\begin{proof}
See Appendix \ref{secProofMain}.
\end{proof}

\subsection{Finite-horizon bounds for \texttt{UCB1} and \texttt{KL-UCB} variants} \label{secResultsFinite}

At the level of generality of Assumptions \ref{assMono}-\ref{assConv2}, we can only bound regret in terms of the $S_t^g(x_1,x_2)$ function from Definition \ref{defnConv}. If we specialize to \texttt{UCB1} and \texttt{KL-UCB}, Proposition \ref{propAssumptions} and Lemma \ref{lemUpperFinite} yield bounds depending only on the horizon and the arm means. We begin by stating the bound for \texttt{UCB1} (the proof details can be found in Appendix \ref{secProofCor}).

\begin{cor} \label{corFiniteUCB1}
Let $n \in \{2,3,\ldots,\}$ and define $C_n = 2 + (\log n )^{1/3}$. Then
\begin{alignat}{2}
R_n ( \pi^a ( g^{\texttt{UCB1}} ) , \pi^m ) & \leq \min \Bigg\{ && \sum_{k=2}^K  \Bigg( \frac{8 \log n}{ \Delta_k  } + \frac{160}{\Delta_k}  + \frac{19 \Delta_k}{6} \log_2 \Bigg(\ceil*{\frac{8 \log n}{ \Delta_k^2 }} \vee \frac{64}{\Delta_k^2} \Bigg) \Bigg)  , \\
& && \sum_{k=2}^K   \Bigg( \frac{2 \log n}{ \Delta_k} + \frac{8  ( 1 - \frac{1}{C_n} ) \log n }{ \Delta_k  ( 1 -\frac{2}{C_n} )^2 C_n } + \frac{10 C_n^2}{\Delta_k}  \\
& && \quad\quad\quad\quad +  \frac{ 13 \Delta_k }{4} \log_2 \Bigg(\ceil*{\frac{2 \log n}{ \Delta_k^2 ( 1 - \frac{2}{C_n} )^2 }} \vee \frac{4 C_n^2}{\Delta_k^2} \Bigg)  \Bigg) \Bigg\}  \\
& = \min \Bigg\{ && \sum_{k=2}^K \frac{ 8 \log n }{ \Delta_k } + O \Big( \frac{K \log \log n}{\Delta_2} \Big)  , \sum_{k=2}^K \frac{ 2 \log n }{ \Delta_k } + O \Big( \frac{K ( \log n )^{2/3}}{\Delta_2} \Big)  \Bigg\} . 
\end{alignat}
\end{cor}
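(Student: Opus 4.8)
\textbf{Proof plan for Corollary \ref{corFiniteUCB1}.}

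The plan is to instantiate Lemma \ref{lemUpperFinite} with the explicit formula for $S_n^{g^{\texttt{UCB1}}}$ from Proposition \ref{propAssumptions}, namely $S_n(\mu_k+\delta,\mu_1-\delta) = \ceil{2\log n / (\mu_1 - \mu_k - 2\delta)^2}$, and then bound the resulting one-dimensional minimization over $\delta \in (0,\Delta_k/2)$ from above by evaluating it at two convenient choices of $\delta$. Lemma \ref{lemUpperFinite} gives
\[
R_n(\pi^a(g^{\texttt{UCB1}}),\pi^m) \le \sum_{k=2}^K \Delta_k \min_{\delta \in (0,\Delta_k/2)} \Big( S_n(\mu_k+\delta,\mu_1-\delta) + 3\log_2\big(S_n(\mu_k+\delta,\mu_1-\delta) \vee \tfrac{4}{\delta^2}\big) + \tfrac{10}{\delta^2}\Big),
\]
so it suffices to upper bound the bracketed quantity at one specific $\delta$ for each of the two terms in the stated minimum (the $\min$ of the two is then automatic). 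Since $\Delta_k = \mu_1 - \mu_k$, we have $\mu_1 - \mu_k - 2\delta = \Delta_k - 2\delta$, so $S_n(\mu_k+\delta,\mu_1-\delta) = \ceil{2\log n/(\Delta_k - 2\delta)^2}$.

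For the \emph{first} term of the $\min$, I would take $\delta = \Delta_k/4$, so that $\Delta_k - 2\delta = \Delta_k/2$ and $S_n = \ceil{8\log n/\Delta_k^2}$, while $4/\delta^2 = 64/\Delta_k^2$ and $10/\delta^2 = 160/\Delta_k^2$. Multiplying through by $\Delta_k$ and summing over $k$, the $\Delta_k \cdot \ceil{8\log n/\Delta_k^2}$ term contributes at most $8\log n/\Delta_k + \Delta_k$ per arm (bounding the ceiling by adding $1$, and $\Delta_k \le 1$ can absorb the $+1$ into the constant or I keep it clean as written), the $160/\Delta_k^2$ term contributes $160/\Delta_k$, and the $3\log_2(\cdot)$ term contributes $3\Delta_k\log_2(\ceil{8\log n/\Delta_k^2}\vee 64/\Delta_k^2)$; the stated coefficient $19\Delta_k/6$ absorbs both this $3$ and the leftover additive slack from the ceilings. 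For the \emph{second} term, I would take $\delta = \Delta_k/C_n$ with $C_n = 2 + (\log n)^{1/3}$, so $\Delta_k - 2\delta = \Delta_k(1 - 2/C_n)$, giving $S_n = \ceil{2\log n/(\Delta_k^2(1-2/C_n)^2)}$; the main term $\Delta_k S_n$ then splits (via $\ceil{x}\le x+1$ and the algebraic identity $\frac{1}{(1-2/C_n)^2} = 1 + \frac{4(1-1/C_n)}{(1-2/C_n)^2 C_n}$, or simply by writing $\frac{1}{(1-2/C_n)^2} = 1 + \big(\frac{1}{(1-2/C_n)^2}-1\big)$ and simplifying the second factor) into the $2\log n/\Delta_k$ leading term plus the $\frac{8(1-1/C_n)\log n}{\Delta_k(1-2/C_n)^2 C_n}$ correction; meanwhile $10/\delta^2 = 10C_n^2/\Delta_k^2$ yields the $10C_n^2/\Delta_k$ term, and the logarithmic term gives $3\Delta_k\log_2(\ceil{2\log n/(\Delta_k^2(1-2/C_n)^2)}\vee 4C_n^2/\Delta_k^2)$, with coefficient raised to $13\Delta_k/4$ to absorb ceiling slack. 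Finally, the asymptotic line follows by noting $C_n = \Theta((\log n)^{1/3})$, so $C_n^2 = \Theta((\log n)^{2/3})$ and $\frac{(1-1/C_n)\log n}{(1-2/C_n)^2 C_n} = \Theta((\log n)^{2/3})$ in the second branch, and $\log_2(\cdot) = O(\log\log n)$ in the first branch; using $\Delta_k \ge \Delta_2$ for the error terms (more precisely bounding $1/\Delta_k \le 1/\Delta_2$ and $\Delta_k \le 1$) collects everything into $O(K(\log\log n)/\Delta_2)$ and $O(K(\log n)^{2/3}/\Delta_2)$ respectively.

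The routine-but-delicate part is bookkeeping the constants: one must verify that the chosen coefficients ($19/6$, $13/4$, the additive $160$ and $10C_n^2$) genuinely dominate after accounting for (i) the $+1$ slack in each $\ceil{\cdot}$, (ii) the factor $\Delta_k \le 1$ used to absorb stray additive terms, and (iii) the fact that $3 < 19/6$ and $3 < 13/4$ leaves a small margin to swallow, e.g., a term like $\Delta_k \cdot 1$ inside the log's argument or a $\Delta_k$ from $\Delta_k\ceil{\cdot} \le \Delta_k(\cdot) + \Delta_k$. I expect the main obstacle to be precisely this constant-chasing in the second branch, where the $C_n$-dependence makes the algebra of $(1-2/C_n)^{-2}$ the least transparent; everything else is a direct substitution into Lemma \ref{lemUpperFinite}. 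The full details are deferred to Appendix \ref{secProofCor}.
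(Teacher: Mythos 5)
Your proposal is correct and follows essentially the same route as the paper's proof in Appendix \ref{secProofCor}: the same two choices $\delta_k = \Delta_k/4$ and $\delta_k = \Delta_k/C_n$ plugged into Lemma \ref{lemUpperFinite} via Proposition \ref{propAssumptions}, the same identity $\frac{1}{(1-2/C_n)^2} - 1 = \frac{4(1-1/C_n)}{C_n(1-2/C_n)^2}$ to extract the $2\log n/\Delta_k$ leading term, and the same device of absorbing the $+1$ ceiling slack into the $\log_2$ coefficient (the paper uses $1 \le \tfrac{1}{6}\log_2(64/\Delta_k^2)$ and $1 \le \tfrac14\log_2(4C_n^2/\Delta_k^2)$ to get $19/6$ and $13/4$, exactly as you anticipated). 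The constant-chasing you flag as the remaining work is indeed all that is left, and it goes through as you describe.
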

The $\log n$ term in the second bound is precisely the limiting lower bound from Lemma \ref{lemLower}. The first bound is worse for large $n$, but it more closely resembles the bound $\tilde{R}_n( g^{\texttt{UCB1}} ) \leq \sum_{k=2}^K 8 \log (n) / \Delta_k + O ( 1 )$ from \cite{auer2002finite}. In particular, under the one-bit feedback restriction, we recover the $O (\log n )$ term while increasing the $O(1)$ term to $O(\log \log n)$.

For \texttt{KL-UCB}, the $\log n$ term similarly matches the limiting lower bound from Lemma \ref{lemLower}.
\begin{cor} \label{corFiniteKL}
Let $n \in \{2,3,\ldots,\}$ and define $C_n = 2 + ( \log n )^{1/3}$. Then
\begin{align}
R_n ( \pi^a ( g^{\texttt{KL-UCB}} ) , \pi^m ) & \leq \sum_{k=2}^K   \Bigg( \frac{\Delta_k \log f(n)}{ d ( \mu_k , \mu_1 ) } + \frac{ (\frac{ 1 }{ \mu_k } + \frac{1}{1-\mu_1} )^2 \log f(n) }{ C_n \Delta_k ( 2 - \frac{1}{C_n} )^2  }   + \frac{10 C_n^2}{\Delta_k}  \\
& \quad\quad\quad\quad + \frac{13 \Delta_k}{4} \log_2 \Bigg(\ceil*{\frac{\log f(n)}{ 2 \Delta_k^2 ( 1 - \frac{2}{C_n} )^2  }} \vee \frac{4 C_n^2}{\Delta_k^2} \Bigg) \Bigg) \\
& = \sum_{k=2}^K  \frac{\Delta_k \log f(n)}{ d(\mu_k,\mu_1)}  + O \Big( \frac{K ( \log n )^{2/3}}{\Delta_2} \Big).
\end{align}
\end{cor}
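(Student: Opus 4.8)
The plan is to derive the bound by specializing Lemma~\ref{lemUpperFinite} to $g = g^{\texttt{KL-UCB}}$ and making a judicious choice of $\delta$ in the per-arm estimate \eqref{eqNumPlaysMain}. Since $g^{\texttt{KL-UCB}}$ satisfies Assumptions~\ref{assMono}--\ref{assUpper} (Proposition~\ref{propAssumptions}), Lemma~\ref{lemUpperFinite} applies, and substituting the closed form $S_n^{g^{\texttt{KL-UCB}}}(x_1,x_2) = \ceil*{\log(f(n))/d(x_1,x_2)}$ gives, for each $k \geq 2$ and each $\delta \in (0,\Delta_k/2)$, a bound on $\E[T_k(n)]$ of the form $(\ceil*{\log f(n)/d(\mu_k+\delta,\mu_1-\delta)} \vee 4\delta^{-2}) + 3\log_2(\ceil*{\log f(n)/d(\mu_k+\delta,\mu_1-\delta)} \vee 4\delta^{-2}) + 6\delta^{-2}$. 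I would then take $\delta = \Delta_k/C_n$; this lies in $(0,\Delta_k/2)$ because $C_n = 2 + (\log n)^{1/3} > 2$, and with this choice $4\delta^{-2} = 4C_n^2/\Delta_k^2$ is precisely the second argument of the maximum in the corollary's logarithmic term, while Pinsker's inequality $d(p,q) \geq 2(p-q)^2$ bounds $\ceil*{\log f(n)/d(\mu_k+\delta,\mu_1-\delta)}$ by $\ceil*{\log f(n)/(2\Delta_k^2(1-2/C_n)^2)}$, the first argument. Everything then reduces to converting $\Delta_k\log f(n)/d(\mu_k+\delta,\mu_1-\delta)$ into the first two terms of the displayed bound.

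That conversion is the crux and amounts to comparing $d(\mu_k+\delta,\mu_1-\delta)$ with $d(\mu_k,\mu_1)$. I would do this via a two-step mean-value estimate, shifting the first coordinate of $d$ from $\mu_k$ to $\mu_k+\delta$ and then the second from $\mu_1$ to $\mu_1-\delta$, using $\partial_p d(p,q) = \log\frac{p(1-q)}{q(1-p)}$ and $\partial_q d(p,q) = \frac{q-p}{q(1-q)}$. On the relevant intervals (where $\mu_k < p < q < \mu_1$, using $\delta < \Delta_k/2$) the inequalities $\log(1+x) \leq x$ and $\frac{1}{\mu_k(1-\mu_1)} \leq \frac{1}{4}(\frac{1}{\mu_k}+\frac{1}{1-\mu_1})^2$ bound these derivatives by multiples of $\Delta_k(\frac{1}{\mu_k}+\frac{1}{1-\mu_1})^2$, giving $d(\mu_k,\mu_1) - d(\mu_k+\delta,\mu_1-\delta) = O(\delta\,\Delta_k(\frac{1}{\mu_k}+\frac{1}{1-\mu_1})^2)$; dividing by $d(\mu_k+\delta,\mu_1-\delta)\,d(\mu_k,\mu_1)$ and lower-bounding the denominator (again via Pinsker) yields an estimate of the form
$$\frac{1}{d(\mu_k+\delta,\mu_1-\delta)} \leq \frac{1}{d(\mu_k,\mu_1)} + \frac{(\frac{1}{\mu_k}+\frac{1}{1-\mu_1})^2}{C_n\,\Delta_k^2\,(2-\frac{1}{C_n})^2},$$
which after multiplication by $\Delta_k\log f(n)$ is exactly the first two terms of the corollary. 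Pinning down the precise constant and the exact $(2-\frac{1}{C_n})^2$ in the denominator is where the careful bookkeeping lives: one must be deliberate about which divergences are kept exact and which are replaced by their Pinsker lower bounds, and about how the two coordinate shifts interact with the factor $C_n$. This is the step I expect to be the main obstacle.

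Finally I would assemble the pieces. Using $a \vee b \leq a + b$, the terms $\Delta_k(\ceil*{\cdots} \vee 4\delta^{-2})$ and $6\Delta_k\delta^{-2}$ combine to $\Delta_k\ceil*{\log f(n)/d(\mu_k+\delta,\mu_1-\delta)} + 10C_n^2/\Delta_k$; the ceiling here costs an additive $\Delta_k$, which --- since the argument of $\log_2$ is at least $4C_n^2/\Delta_k^2 \geq 16$, so $\log_2(\cdot) \geq 4$ --- is absorbed by raising the coefficient of $\Delta_k\log_2(\cdot)$ from $3$ to $\frac{13}{4}$. The asymptotic display then follows from $C_n = 2 + (\log n)^{1/3}$: the middle term and $10C_n^2/\Delta_k$ are both $O(K(\log n)^{2/3}/\Delta_2)$, the $\log_2$ term is $O(K\log\log n/\Delta_2)$ and hence also $O((\log n)^{2/3})$, and $\sum_{k} \Delta_k\log f(n)/d(\mu_k,\mu_1)$ is the stated leading term. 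Structurally this mirrors the \texttt{UCB1} argument (Corollary~\ref{corFiniteUCB1}); the only genuinely new ingredient is the Kullback--Leibler comparison of the second paragraph.
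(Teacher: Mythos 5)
Your proposal follows the paper's proof essentially step for step: the same choice $\delta_k = \Delta_k/C_n$ in Lemma~\ref{lemUpperFinite} with the closed form of $S_n^{g^{\texttt{KL-UCB}}}$ from Proposition~\ref{propAssumptions}, the same Pinsker substitutions (both inside the $\log_2$ term and to lower-bound the product $d(\mu_k,\mu_1)\,d(\mu_k+\delta,\mu_1-\delta)$), the same add-and-subtract of $\sum_k \Delta_k \log f(n)/d(\mu_k,\mu_1)$, and the same absorption of the ceiling's $+1$ into the $\tfrac{13}{4}$ coefficient via $\log_2(4C_n^2/\Delta_k^2)\ge 4$. The only variation is cosmetic: you bound $d(\mu_k,\mu_1)-d(\mu_k+\delta,\mu_1-\delta)$ by integrating the partial derivatives of $d$ along the segment, whereas the paper expands the difference algebraically into three terms ($\chi_1,\chi_2,\chi_3$) and applies $\log x\le x-1$ to each --- the same first-order estimate, yielding the same $\tfrac{\Delta_k^2}{C_n}\bigl(\tfrac{1}{\mu_k}+\tfrac{1}{1-\mu_1}\bigr)^2$ bound on the numerator.
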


\begin{rem}
In the proofs of Corollaries \ref{corFiniteUCB1} and \ref{corFiniteKL}, we choose values of $\delta$ which are not necessarily the minimizers in Lemma \ref{lemUpperFinite}. These values are chosen to ensure the logarithmic terms match the lower bounds from Lemma \ref{lemLower} and the sublogarithmic terms have reasonably simple expressions. Thus, the constants multiplying sublogarithmic terms are not optimized, and these terms may dominate for small $n$. However, we will soon show the empirical performance of our policies is competitive with the MAB policies uniformly in $n$. 
\end{rem}

\section{Experiments} \label{secExp}

\begin{figure}
\centering
\includegraphics[width=6in]{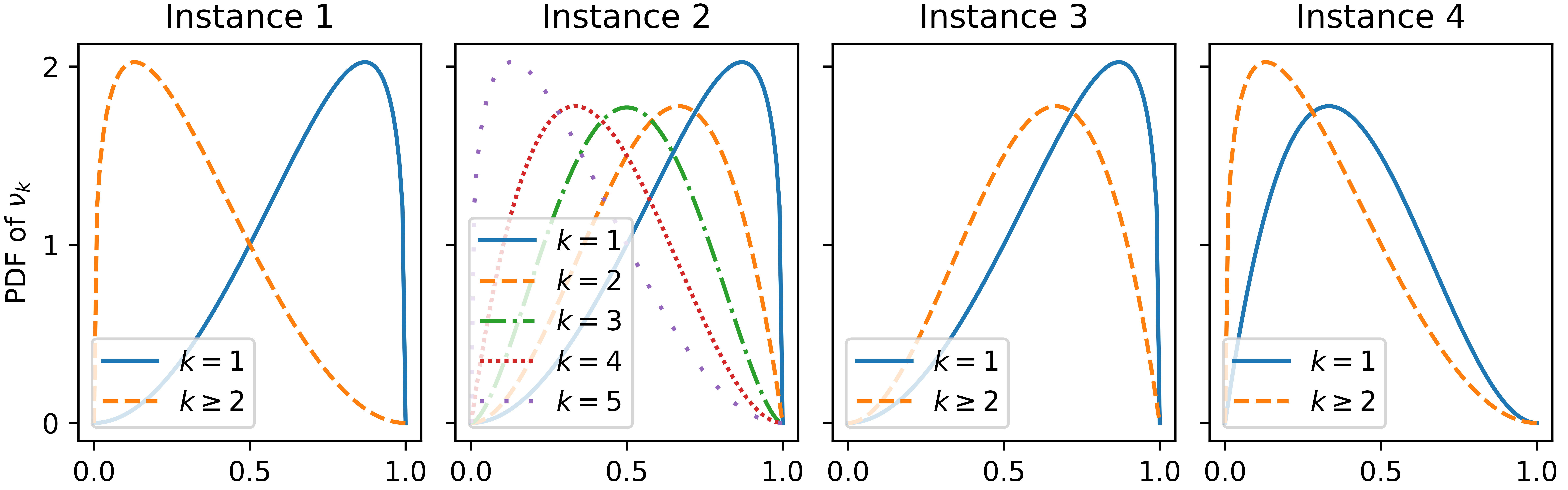}
\caption{PDFs of reward distributions.} \label{figInstances}
\end{figure}

Before closing, we illustrate the performance of our policies numerically. We set $K=5$ and $\nu_k = \text{Beta}(a_k,b_k)$ for four choices of $a = (a_k)_{k=1}^5$ and $b = (b_k)_{k=1}^5$. This results in  four problem instances with the PDFs of $\nu_k$ shown in Figure \ref{figInstances}. The instances are as follows:
\begin{enumerate}
\item Large arm gap: $a = (3,1.3,\ldots,1.3), b = (1.3,3,\ldots,3)$.
\item Small arm gap, distinct arms: $a = (3,3,2.7,2,1.3), b = (1.3,2,2.7,3,3)$.
\item Small gap, identical suboptimal arms, high rewards: $a = (3,\ldots,3), b = (1.3,2,\ldots,2)$.
\item Small gap, identical suboptimal arms, low rewards: $a = (2,1.3,\ldots,1.3), b = (3,\ldots,3)$.
\end{enumerate}

\begin{figure}
\centering
\includegraphics[width=6in]{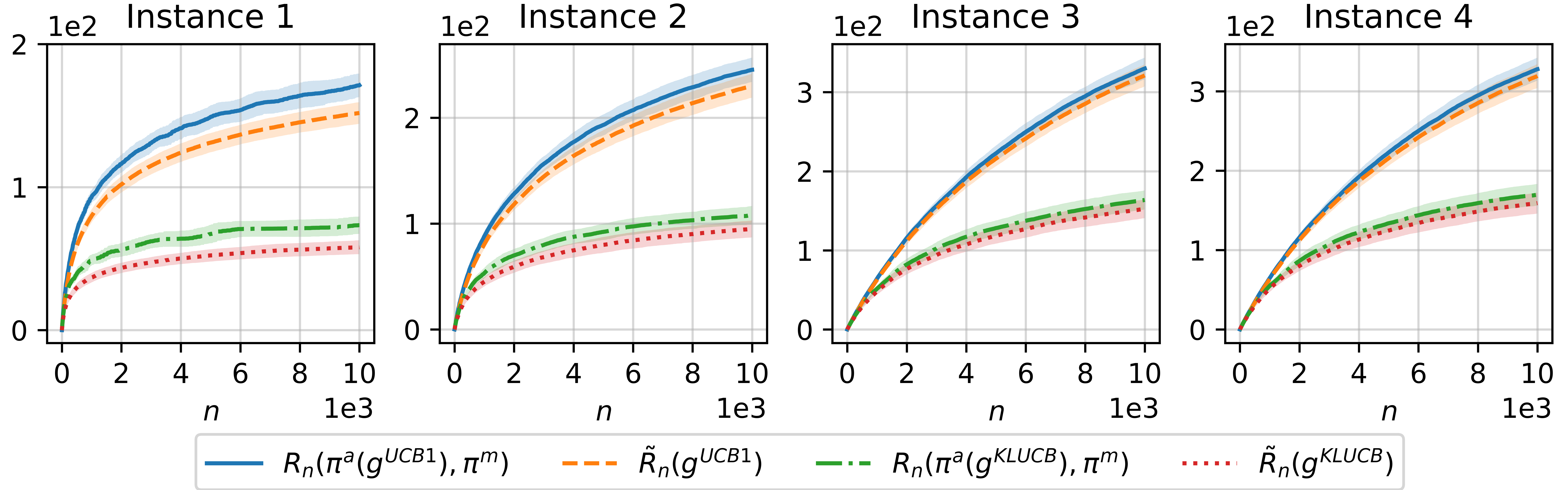}
\caption{Regret for proposed policies and corresponding MAB policies.} \label{figRegret}
\end{figure}

In Figure \ref{figRegret}, we plot the estimated regret (mean $\pm$ standard deviation across 100 trials) for the \texttt{UCB1} and \texttt{KL-UCB} variants of our policy ($R_n(\pi^a( g^{\texttt{UCB1} } ) , \pi^m )$ and $R_n(\pi^a( g^{\texttt{KL-UCB} } ) , \pi^m )$, respectively) at each $n \in [10^4]$. For comparison, we show estimated regret for \texttt{UCB1} and \texttt{KL-UCB} in the standard MAB setting ($\tilde{R}_n( g^{\texttt{UCB1}})$ and $\tilde{R}_n( g^{\texttt{KL-UCB} } )$, respectively). For both choices of $g$, $R_n(\pi^a( g),\pi^m)$ closely tracks $\tilde{R}_n(g)$. We emphasize that, while Theorem \ref{thmMain} ensures these quantities are asymptotically equivalent, Figure \ref{figRegret} shows they are close uniformly in $n$. These behaviors are consistent across problem instances, which are roughly ordered in terms of regret (observe, for example, that the large gap instance quickly reaches its logarithmic regret regime). Finally, we note that \texttt{KL-UCB} outperforming \texttt{UCB1} is unsurprising based on results from \cite{cappe2013kullback,garivier2011kl,maillard2011finite}; however, it is noteworthy that $R_n(\pi^a( g^{\texttt{KL-UCB} } ) , \pi^m )$ is significantly smaller than $\tilde{R}_n( g^{\texttt{UCB1}})$ across $n$ and problem instances.

\section*{Acknowledgements}

This work was partially supported by ONR Grant N00014-19-1-2566, NSF Grants SATC 1704778 and CCF 1934986, and ARO Grants W911NF-17-1-0359 and W911NF-19-1-0379.

\bibliographystyle{plain}
\bibliography{references}

\newpage \appendix

\section{Proof of Lemma \ref{lemUpperFinite}} \label{secProofUpperFinite}

As discussed in the proof sketch, we prove \eqref{eqNumPlaysMain} using ideas from \cite[Chapter 10]{lattimore2020bandit}. In the case $n \leq K$, Algorithm \ref{algArm} ensures $T_k(n) \leq 1$ (arms are played in round robin once), and the right side of \eqref{eqNumPlaysMain} exceeds $1$, so the bound is immediate. In the case $n > K$, we first define
\begin{gather}
\sigma_1 = \min \left\{ t \in \N : \min_{s \in [n]} g_t ( \hat{\mu}_{1,s},s) \geq \mu_1 - \delta \right\}  , \label{eqTau1Defn} \\
\sigma_k = \sum_{s=1}^n 1 ( g_n (\bar{\mu}_{k,s} , \eta(s)   ) \geq \mu_1 - \delta ) , \label{eqTauKDefn}
\end{gather}
where $1(\cdot)$ denotes the indicator function and $\bar{\mu}_{k,s}$ is defined in \eqref{eqDefnMuBar}. We will show
\begin{gather}
\E T_k(n)  \leq \E \sigma_1 + \E \sigma_k , \label{eqUpFin1}  \\
\E \sigma_1 \leq \frac{2}{\delta^2} , \label{eqUpFin2} \\
\E \sigma_k \leq S_n(\mu_k+\delta,\mu_1-\delta) \vee \frac{4}{\delta^2} + 3 \log_2 \left(  S_n(\mu_k+\delta,\mu_1-\delta) \vee \frac{4}{\delta^2}  \right) + \frac{4}{\delta^2} , \label{eqUpFin3}
\end{gather}
from which \eqref{eqNumPlaysMain} follows. For \eqref{eqUpFin1}, we first claim that for any $t \in \{K+1,\ldots,n\}$,
\begin{equation}\label{eqIndicatorIneq}
A_t = k , t \geq \sigma_1 , T_k(t-1) = s \quad \Rightarrow \quad g_n(  \bar{\mu}_{k,s} , \eta(s)   )  \geq \mu_1 - \delta .
\end{equation}
To prove \eqref{eqIndicatorIneq}, we assume $A_t = k , t \geq \sigma_1 , T_k(t-1) = s$ and observe
\begin{align}
g_n( \bar{\mu}_{k,s} , \eta(s)   ) & \geq g_t(  \bar{\mu}_{k,s}, \eta(s)   )  = g_t(  \bar{\mu}_{k,T_k(t-1)} , \eta(T_k(t-1))   ) \\
& \geq g_t(  \bar{\mu}_{1,T_1(t-1)} , \eta(T_1(t-1))   ) \geq g_t(   \hat{\mu}_{1,\eta(T_1(t-1)) } , \eta(T_1(t-1))   ) \\
& \geq \min_{s' \in [n]} g_t(   \hat{\mu}_{1, s' } , s'  )   \geq \min_{s' \in [n]} g_{\sigma_1}(   \hat{\mu}_{1, s' } , s'  ) \geq \mu_1 - \delta ,
\end{align}
where the (in)equalities follow from Assumption \ref{assMono}, $T_k(t-1)=s$, the arm policy \eqref{eqArmPolicy} (recall $t > K$), Proposition \ref{propMessageAccuracy} and Assumption \ref{assMono}, $\eta(T_1(t-1)) \in [n]$, $t \geq \sigma_1$ and Assumption \ref{assMono}, and definition of $\sigma_1$ \eqref{eqTau1Defn}, respectively. Thus, \eqref{eqIndicatorIneq} holds, from which we obtain
\begin{align}
T_k(n) & = 1 + \sum_{t=K+1}^n 1 ( A_t = k , t \leq \sigma_1 - 1 ) + \sum_{s=1}^n \sum_{t=K+1}^n 1 ( A_t = k , t \geq \sigma_1 , T_k(t-1) = s ) \\
& \leq \sigma_1 + \sum_{s=1}^n 1 ( g_n(  \bar{\mu}_{k,s} , \eta(s)   )  \geq \mu_1 - \delta ) \sum_{t=1}^n 1 ( A_t = k , T_k(t-1) = s ) \\
& \leq \sigma_1 + \sum_{s=1}^n 1 ( g_n(  \bar{\mu}_{k,s} , \eta(s)   )  \geq \mu_1 - \delta )  = \sigma_1 + \sigma_k ,
\end{align}
where the first equality follows from Algorithm \ref{algArm}, the first inequality uses \eqref{eqIndicatorIneq}, and the final equality holds by definition \eqref{eqTauKDefn}. Taking expectation on both sides yields \eqref{eqUpFin1}.

To prove \eqref{eqUpFin2}, we first let $t \in \N$ and use the union bound to write
\begin{equation}\label{eqUpFin2union}
\P ( \sigma_1 > t ) \leq  \sum_{s=1}^{\infty} \P ( g_t ( \hat{\mu}_{1,s} , s ) < \mu_1 - \delta ) . 
\end{equation}
Now fix $s \in \N$. Note $g_t ( \hat{\mu}_{1,s} , s ) < \mu_1 - \delta $ implies $g_t ( \hat{\mu}_{1,s} , s ) < 1$, so by Assumption \ref{assUpper},
\begin{equation}
g_t ( \hat{\mu}_{1,s} , s )  > \hat{\mu}_{1,s} , \quad d ( \hat{\mu}_{1,s} , g_t ( \hat{\mu}_{1,s} , s ) ) \geq \frac{\log  f(t) }{s} .
\end{equation}
Hence, $g_t ( \hat{\mu}_{1,s} , s ) < \mu_1 - \delta $ ensures $\hat{\mu}_{1,s} < g_t ( \hat{\mu}_{1,s} , s ) < \mu_1 - \delta < \mu_1$, so by Proposition \ref{propKL},
\begin{equation}
d ( \hat{\mu}_{1,s} , \mu_1 ) \geq d ( \hat{\mu}_{1,s} , g_t ( \hat{\mu}_{1,s} ,s ) ) + d ( g_t ( \hat{\mu}_{1,s}  ,s ), \mu_1 - \delta ) + d ( \mu_1 - \delta , \mu_1 ) > \frac{ \log f(t) }{ s } + 2 \delta^2 .
\end{equation}
We have thus established the following implication:
\begin{equation}
g_t ( \hat{\mu}_{1,s} , s ) < \mu_1 - \delta  \quad \Rightarrow \quad d ( \hat{\mu}_{1,s} , \mu_1 ) > \frac{ \log f(t) }{ s } + 2 \delta^2 , \hat{\mu}_{1,s} < \mu_1 .
\end{equation}
Plugging into \eqref{eqUpFin2union}, we obtain
\begin{equation}
\P ( \sigma_1 > t ) \leq \sum_{s=1}^{\infty} \P \left( d ( \hat{\mu}_{1,s} , \mu_1 ) > \frac{ \log f(t) }{ s } + 2 \delta^2 , \hat{\mu}_{1,s} < \mu_1 \right) .
\end{equation}
The remainder of the proof follows as in \cite[Lemma 10.7]{lattimore2020bandit}.

Finally, we prove \eqref{eqUpFin3}. As in the sketch, define $u = S_n (  \mu_k + \delta , \mu_1 - \delta  ) \vee \frac{4}{\delta^2}$ and let $s \in \N$ s.t.\ $s \geq u + 3 \log_2 u$. Then by Propositions \ref{propMessageAccuracy} and \ref{propAlphaPsiMain}, we have (similar to \eqref{eqMuKEtaKMain} in the sketch),
\begin{equation}\label{eqMuKEtaKapp}
\eta(s) \geq \frac{s}{2} \vee u , \quad \bar{\mu}_{k,s} < \hat{\mu}_{k,\eta(s)} + \frac{\delta}{2} .
\end{equation}
Now suppose $g_n (  \bar{\mu}_{k,s} , \eta(s)  ) \geq \mu_1 - \delta$ and $\hat{\mu}_{k,\eta(s)} < \mu_k + \delta / 2$. Then
\begin{equation}
\mu_1 - \delta \leq g_n (  \bar{\mu}_{k,s} , \eta(s)  ) < g_n ( \mu_k + \delta , S_n ( \mu_k+\delta , \mu_1 - \delta ) ) \leq \mu_1 - \delta ,
\end{equation}
where the first inequality holds by assumption; the second uses \eqref{eqMuKEtaKapp}, $\hat{\mu}_{k,\eta(s)} \leq \mu_k + \delta / 2$, and Assumption \ref{assMono}; and the third uses Definition \ref{defnConv}. This is a contradiction, so we conclude
\begin{equation}
g_n (  \bar{\mu}_{k,s} , \eta(s)  ) \geq \mu_1 - \delta \quad \Rightarrow \quad \hat{\mu}_{k,\eta(s)} \geq \mu_k + \frac{\delta}{2}  .
\end{equation}
Thus, by the Hoeffding bound (see Proposition \ref{propHoeffding}) and \eqref{eqMuKEtaKapp},
\begin{equation}
\P ( g_n (  \bar{\mu}_{k,s} , \eta(s)  ) \geq \mu_1 - \delta ) \leq \P ( \hat{\mu}_{k,\eta(s)} \geq \mu_k + \delta / 2  ) \leq e^{ - \delta^2\eta(s) /2 } \leq  e^{ -  \delta^2 s / 4  } .
\end{equation}
Since this argument holds for any $s \geq u + 3 \log_2 u$, we conclude
\begin{equation}
\sum_{s = \ceil{ u + 3 \log_2 u } }^{\infty}  \P ( g_n (  \bar{\mu}_{k,s} , \eta(s)  ) \geq \mu_1 - \delta )  \leq \sum_{s=1}^{\infty} e^{ - \delta^2 s / 4 } \leq \int_{s=0}^{\infty}  e^{ - \delta^2 s / 4 } ds  = \frac{4}{\delta^2} . 
\end{equation}
The inequality \eqref{eqUpFin3} then follows by definition \eqref{eqTauKDefn}:
\begin{equation}
\E \sigma_k = \sum_{s=1}^{\infty} \P ( g_n ( \bar{\mu}_{k,s} , \eta(s) ) \geq \mu_1 - \delta ) \leq u + 3 \log_2 u + \frac{4}{\delta^2} .
\end{equation}

\section{Proof of Lemma \ref{lemLower}} \label{secProofLower}

Fix $\delta \in (0, \min \{ \mu_K , 1 - \mu_1 \})$ and (as in the proof sketch) define the events
\begin{allowdisplaybreaks}
\begin{gather}
\mathcal{E}_n = \left\{ \sum_{k=2}^K \Delta_k \tilde{T}_k(n) <  \sum_{k=2}^K \Delta_k S_{\ceil{(1-\delta)n}} ( \mu_k - \delta , \mu_1 + \delta  ) \right\} , \\
\mathcal{F}_n = \left\{ \max_{s \in \{ \ceil{ (1-2\delta) n}, \ldots , n \}} g_n ( \hat{\mu}_{1,s} , s ) > \mu_1 + \delta  \right\} , \\
\mathcal{G}_{n,k} = \left\{ \min_{s \in \{1,\ldots, S_{ \ceil{(1-\delta)n}} (\mu_k-\delta,\mu_1+\delta) \}} g_{\ceil{(1-\delta)n}} ( \hat{\mu}_{k,s},s) \leq \mu_1 + \delta  \right\}  , \quad \mathcal{G}_n = \cup_{k=2}^K \mathcal{G}_{n,k} .
\end{gather}
\end{allowdisplaybreaks}
Then by the regret decomposition \eqref{eqRegretMAB},
\begin{equation}
\tilde{R}_n(g) = \E \sum_{k=2}^K \Delta_k \tilde{T}_k(n) \geq \E \sum_{k=2}^K \Delta_k \tilde{T}_k(n) 1 ( \mathcal{E}_n^C ) \geq \sum_{k=2}^K \Delta_k S_{\ceil{(1-\delta)n}} ( \mu_k - \delta , \mu_1 + \delta  ) \P ( \mathcal{E}_n^C ) .
\end{equation}
Thus, we can prove the lemma by showing $\P ( \mathcal{E}_n ) \rightarrow 0$ as $n \rightarrow \infty$. In particular, we will prove the three results discussed in the proof sketch:
\begin{gather}
\exists\ N \in \N\ s.t.\ \mathcal{E}_n \subset \mathcal{F}_n \cup \mathcal{G}_n\ \forall\ n \geq N , \label{eqNewEvents} \\
\lim_{n \rightarrow \infty} \P ( \mathcal{F}_n  ) = 0 , \label{eqLimFn} \\
\lim_{n \rightarrow \infty} \P ( \mathcal{G}_{n,k}  ) = 0\ \forall\ k \in [K] . \label{eqLimGn}
\end{gather}

We begin with \eqref{eqNewEvents}. As in the proof sketch, we will argue $\mathcal{E}_n \cap \mathcal{F}_n^C \cap \mathcal{G}_n^C$ results in a contradiction for large $n$. Toward this end, first note that when $n$ is large and $\mathcal{E}_n$ holds,
\begin{equation}\label{eqAssConv1app}
\sum_{k=2}^K \tilde{T}_k(n) \leq \frac{1}{\Delta_2} \sum_{k=2}^K \Delta_k \tilde{T}_k(n) < \frac{1}{\Delta_2} \sum_{k=2}^K \Delta_k S_{\ceil{(1-\delta)n}} ( \mu_k - \delta , \mu_1 + \delta  ) \leq \delta n -1 ,
\end{equation}
where the final inequality holds by Assumption \ref{assConv1}. Thus, for large $n$, $\mathcal{E}_n$ implies
\begin{align}
\tilde{T}_1 ( \ceil{ (1-\delta) n } ) & = n - \sum_{k=2}^K \tilde{T}_k(n) - \left( \tilde{T}_1(n) - \tilde{T}_1 ( \ceil{ (1-\delta) n } ) \right) \\
& > n - \delta n + 1 - ( n - \ceil{ (1-\delta) n } ) \geq \ceil{ (1-2\delta)n} .
\end{align}
Consequently, when $n$ is large, $\mathcal{E}_n$ and $\mathcal{F}_n^C$ imply that for any $t \in \{ 1 + \ceil{(1-\delta)n} , \ldots , n \}$,
\begin{equation}\label{eqNewEvents1}
g_t ( \hat{\mu}_{1, \tilde{T}_1(t-1)} , \tilde{T}_1(t-1) ) \leq \max_{s \in \{ \ceil{ (1-2\delta) n}, \ldots , n \}} g_n ( \hat{\mu}_{1,s} , s ) \leq \mu_1 + \delta ,
\end{equation}
where we also used monotonicity of $\tilde{T}_1(\cdot)$ and Assumption \ref{assMono}. On the other hand, $\mathcal{E}_n$ implies $\tilde{T}_{k_n}(n) < S_{ \ceil{(1-\delta)n} } ( \mu_{k_n} - \delta , \mu_1 + \delta )$ for some $k_n \in \{2,\ldots,K\}$ (else, $\mathcal{E}_n^C$ holds). Thus, for any $t \in \{ 1+\ceil{(1-\delta)n} , \ldots , n \}$, $\mathcal{E}_n$ and $\mathcal{G}_n^C$ imply
\begin{align}
g_t ( \hat{\mu}_{k_n,\tilde{T}_{k_n}(t-1)} , \tilde{T}_{k_n}(t-1) ) & \geq g_{ \ceil{ (1-\delta) n } } ( \hat{\mu}_{k_n,\tilde{T}_{k_n}(t-1)} , \tilde{T}_{k_n}(t-1) ) \\
& \geq \min_{s \in \{1,\ldots, S_{ \ceil{(1-\delta)n}} (\mu_{k_n}-\delta,\mu_1+\delta) \}} g_{\ceil{(1-\delta)n}} ( \hat{\mu}_{k_n,s},s) > \mu_1 + \delta , \label{eqNewEvents2}
\end{align}
where we also used monotonicity of $\tilde{T}_{k_n}(\cdot)$ and Assumption \ref{assMono}. Comparing \eqref{eqNewEvents1} and \eqref{eqNewEvents2}, we see that for large $n$, $\mathcal{E}_n \cap \mathcal{F}_n^C \cap \mathcal{G}_n^C$ implies that for any $t \in \{ 1+\ceil{(1-\delta)n} , \ldots , n \}$,
\begin{equation}
g_t ( \hat{\mu}_{k_n,\tilde{T}_{k_n}(t-1)} , \tilde{T}_{k_n}(t-1) ) > g_t ( \hat{\mu}_{1,\tilde{T}_{1}(t-1)} , \tilde{T}_{1}(t-1) ) .
\end{equation}
By the policy \eqref{eqTildeAt}, this means $\tilde{A}_t \neq 1\ \forall\ t \in \{ 1+\ceil{(1-\delta)n} , \ldots , n \}$, and consequently,
\begin{equation}
\sum_{k=2}^K \tilde{T}_k(n)  \geq n - \ceil{(1-\delta)n} \geq \delta n - 1 ,
\end{equation}
which contradicts \eqref{eqAssConv1app}. Thus, $\mathcal{E}_n \cap \mathcal{F}_n^C \cap \mathcal{G}_n^C = \emptyset$ for large $n$, completing the proof of \eqref{eqNewEvents}.

For \eqref{eqLimFn}, we claim (and return to prove) that for $n$ sufficiently large and $s \geq \ceil{(1-2\delta) n}$,
\begin{equation}\label{eqFnToZeroKeyImp}
g_n ( \hat{\mu}_{1,s} , s ) > \mu_1 + \delta  \quad \Rightarrow \quad \hat{\mu}_{1,s} > \mu_1 + \delta / 2 .
\end{equation}
The proposition then follows from the union bound and Proposition \ref{propHoeffding}:
\begin{equation}
\P ( \mathcal{F}_n )  \leq \sum_{s =\ceil{(1-2\delta) n}}^{\infty} \P ( \hat{\mu}_{1,s} > \mu_1 + \delta / 2  ) \leq \sum_{s =\ceil{(1-2\delta) n}}^{\infty} e^{- s \delta^2 / 2 } \xrightarrow[n \rightarrow \infty]{} 0 .
\end{equation}
To prove \eqref{eqFnToZeroKeyImp}, suppose instead that $g_n ( \hat{\mu}_{1,s} , s ) > \mu_1 + \delta$ and $\hat{\mu}_{1,s} \leq \mu_1 + \delta / 2$. Then
\begin{equation}
g_n ( \mu_1 + \delta / 2 , \ceil{ (1-2\delta) n } ) \geq g_n ( \hat{\mu}_{1,s} , s ) > \mu_1 + \delta ,
\end{equation}
where we used Assumption \ref{assMono}. Hence, by Definition \ref{defnConv}, $S_n ( \mu_1 + \delta/2 , \mu_1+ \delta ) >  (1-2\delta) n$, which contradicts Assumption \ref{assConv1} for large $n$.

For \eqref{eqLimGn}, we let $s \in \N$ s.t.\ $s <  \frac{\log  f( (1-\delta) n ) }{- \log ( 1 - \mu_1 - \delta )} $. Then since $\hat{\mu}_{k,s} \geq 0$, Assumption \ref{assMono} implies $g_{\ceil{(1-\delta)n}} ( 0,s) \leq g_{\ceil{(1-\delta)n}} ( \hat{\mu}_{k,s},s)$. Thus, if $g_{\ceil{(1-\delta)n}} ( \hat{\mu}_{k,s},s) \leq \mu_1 + \delta$, we obtain
\begin{align}
\log \left(  \frac{1}{1 - \mu_1 - \delta} \right) & \geq \log \left( \frac{1}{ 1 -g_{\ceil{(1-\delta)n}} ( 0,s) } \right) = d ( 0 ,g_{\ceil{(1-\delta)n}} ( 0,s) ) \\
&\geq \frac{\log f( \ceil{(1-\delta)n})}{s} >  \log \left(  \frac{1}{1 - \mu_1 - \delta} \right) ,
\end{align}
where we used Assumption \ref{assUpper}; this is a contradiction. Thus, by the union bound,
\begin{equation}
\P ( \mathcal{G}_{n,k} )  \leq \sum_{s = \ceil{\frac{\log  f ( \ceil{(1-\delta)n } ) }{ - \log ( 1 - \mu_1 - \delta ) }}  }^{  S_{\ceil{(1-\delta)n}}(\mu_k-\delta,\mu_1+\delta)} \P ( g_{\ceil{(1-\delta)n}} ( \hat{\mu}_{k,s},s) \leq \mu_1 + \delta  ) .
\end{equation}
Now fix $s$ as in the sum. Then $g_{\ceil{(1-\delta)n}} ( \mu_k - \delta , s ) \geq \mu_1+\delta$ (since $s <  S_{\ceil{(1-\delta)n}}(\mu_k-\delta,\mu_1+\delta)$), so $g_{\ceil{(1-\delta)n}} ( \hat{\mu}_{k,s},s) < \mu_1 + \delta$ implies $\hat{\mu}_{k,s} \leq \mu_k - \delta$ (else, $g_{\ceil{(1-\delta) n }} ( \mu_k - \delta , s ) < \mu_1 + \delta$). Thus,
\begin{equation}
\P ( \mathcal{G}_{n,k} ) \leq \sum_{s = \ceil{\frac{\log  f ( \ceil{(1-\delta)n } ) }{ - \log ( 1 - \mu_1 - \delta ) }}  }^{ \infty } \P ( \hat{\mu}_{k,s} \leq \mu_k - \delta ) \leq \sum_{s = \ceil{\frac{\log  f ( \ceil{(1-\delta)n } ) }{ - \log ( 1 - \mu_1 - \delta ) }}  }^{\infty } e^{ - 2 \delta^2 s } \xrightarrow[n \rightarrow \infty]{}  0 ,
\end{equation} 
where the second inequality uses Proposition \ref{propHoeffding}.

\section{Proof of Theorem \ref{thmMain}} \label{secProofMain}

Fix $\delta \in (0, \min \{ \Delta_2 / 2 , \mu_K , 1 - \mu_1 \} )$. Note $\delta \in ( 0, \Delta_k / 2 )\ \forall\ k \geq 2$, so by Lemma \ref{lemUpperFinite},
\begin{equation}
R_n ( \pi^a(g), \pi^m ) \leq \sum_{k=2}^K \Delta_k \left( S_n ( \mu_k + \delta , \mu_1 - \delta ) + 3 \log_2 \left( S_n ( \mu_k + \delta , \mu_1 - \delta ) \vee \frac{4}{\delta^2} \right) + \frac{10}{\delta^2} \right) .
\end{equation} 
Dividing both sides by $\sum_{k=2}^K \Delta_k  S_n ( \mu_k + \delta , \mu_1 - \delta )$ and noting these summands are nonnegative,
\begin{equation}\label{eqUpperLogAndConst}
\frac{R_n ( \pi^a(g), \pi^m )}{ \sum_{k=2}^K \Delta_k  S_n ( \mu_k + \delta , \mu_1 - \delta ) } \leq 1 + \sum_{k=2}^K \frac{ 3 \log_2 ( S_n ( \mu_k + \delta , \mu_1 - \delta ) \vee \frac{4}{\delta^2} ) + \frac{10}{\delta^2} }{  S_n ( \mu_k + \delta , \mu_1 - \delta )  } .
\end{equation}
Next, observe that by Assumption \ref{assUpper}, Definition \ref{defnConv}, and Proposition \ref{propKL},
\begin{equation}
S_n ( \mu_k + \delta , \mu_1 - \delta ) \geq \frac{ \log f(n) }{ d ( \mu_k + \delta , g_n ( \mu_k + \delta , S_n ( \mu_k + \delta , \mu_1 - \delta )  ) } \geq \frac{ \log f(n) }{ d( \mu_k + \delta , \mu_1 - \delta ) } \xrightarrow[n \rightarrow \infty]{} \infty .
\end{equation}
Thus, the summation on the right side of \eqref{eqUpperLogAndConst} vanishes, so for all $n$ large,
\begin{equation}
R_n ( \pi^a(g), \pi^m )\leq (1+\delta) \sum_{k=2}^K \Delta_k  S_n ( \mu_k + \delta , \mu_1 - \delta ) .
\end{equation}
Combining with Lemma \ref{lemLower} (which applies by choice of $\delta$), we have, for all $n$ large,
\begin{equation} \label{eqMainResA}
\frac{ R_n ( \pi^a(g), \pi^m )  }{ \tilde{R}_n(g) } \leq \frac{1+\delta}{1-\delta} \left( 1 + \frac{ \sum_{k=2}^K \Delta_k (  S_n ( \mu_k + \delta , \mu_1 - \delta ) - S_{  \ceil{(1-\delta) n} } ( \mu_k - \delta , \mu_1 + \delta ) ) }{ \sum_{k=2}^K \Delta_k  S_{  \ceil{(1-\delta) n} } ( \mu_k - \delta , \mu_1 + \delta ) } \right) . 
\end{equation}
We next observe that by Assumption \ref{assMono} and Definition \ref{defnConv},
\begin{equation}
g_{ \ceil{(1-\delta) n} } ( \mu_k - \delta , S_n ( \mu_k + \delta , \mu_1 - \delta ) ) \leq g_{  n } ( \mu_k + \delta , S_n ( \mu_k + \delta , \mu_1 - \delta ) ) \leq \mu_1 - \delta < \mu_1 + \delta ,
\end{equation}
Thus, $S_n ( \mu_k + \delta , \mu_1 - \delta ) \geq S_{  \ceil{(1-\delta) n} } ( \mu_k - \delta , \mu_1 + \delta )$ by Definition \ref{defnConv}, so similar to \eqref{eqUpperLogAndConst},
\begin{align} \label{eqMainResB}
 & \frac{ \sum_{k=2}^K \Delta_k (  S_n ( \mu_k + \delta , \mu_1 - \delta ) - S_{ \ceil{ (1-\delta) n} } ( \mu_k - \delta , \mu_1 + \delta ) ) }{ \sum_{k=2}^K \Delta_k  S_{  \ceil{ (1-\delta) n} } ( \mu_k - \delta , \mu_1 + \delta ) } \\
& \quad\quad \leq \sum_{k=2}^K \left( \frac{ S_n ( \mu_k + \delta , \mu_1 - \delta )  }{ S_{ \ceil{ (1-\delta) n} } ( \mu_k - \delta , \mu_1 + \delta ) } -1 \right) . 
\end{align}
Combining \eqref{eqMainResA} and \eqref{eqMainResB}, we have shown
\begin{equation}
\frac{ R_n ( \pi^a(g), \pi^m )  }{ \tilde{R}_n(g) }  \leq \frac{1+\delta}{1-\delta}  \left( 1 + \sum_{k=2}^K \left( \frac{ S_n ( \mu_k + \delta , \mu_1 - \delta )  }{ S_{  \ceil{(1-\delta) n} } ( \mu_k - \delta , \mu_1 + \delta ) } -1 \right) \right) . 
\end{equation}
The theorem follows by defining $\{ \delta_i \}_{i \in \N}$ as in Assumption \ref{assConv2}, applying the previous inequality to each $i \in \N$, letting $n \rightarrow \infty$ on both sides, and letting $i \rightarrow \infty$ on the right side.

\section{Proofs of Corollaries \ref{corFiniteUCB1} and \ref{corFiniteKL}} \label{secProofCor}

For Corollary \ref{corFiniteUCB1}, we choose $\delta = \delta_k = \frac{\Delta_k}{4}$ for the $k$-th minimum in Lemma \ref{lemUpperFinite} and use the expression for $S_t(x_1,x_2)$ from Proposition \ref{propAssumptions} and $\ceil{x} < x + 1$ to bound regret by
\begin{equation}
\sum_{k=2}^K  \left( \frac{8 \log n}{ \Delta_k  } + \frac{160}{\Delta_k}  + \Delta_k \left( 3 \log_2 \left(\ceil*{\frac{8 \log n}{ \Delta_k^2 }} \vee \frac{64}{\Delta_k^2} \right) + 1 \right) \right) .
\end{equation}
The first bound then follows from the inequality
\begin{equation} \label{eqPesky1}
1 = \frac{1}{6} \log_2 64 < \frac{1}{6}  \log_2 \left(\ceil*{\frac{8 \log n}{ \Delta_k^2 }} \vee \frac{64}{\Delta_k^2} \right) .
\end{equation}
For the second bound in Corollary \ref{corFiniteUCB1}, we choose $\delta_k = \frac{\Delta_k}{ C_n} \in (0, \frac{\Delta_k}{2} )$ to bound regret by
\begin{align} 
& \sum_{k=2}^K  \left( \frac{2 \log n}{ \Delta_k ( 1 - \frac{2}{C_n} )^2 } + \frac{10 C_n^2}{\Delta_k}  + \Delta_k \left( 3 \log_2 \left(\ceil*{\frac{2 \log n}{ \Delta_k^2 ( 1 - \frac{2}{C_n} )^2 }} \vee \frac{4 C_n^2}{\Delta_k^2} \right)  + 1 \right)  \right) \\
& \quad \leq   \sum_{k=2}^K  \left( \frac{2 \log n}{ \Delta_k ( 1 - \frac{2}{C_n} )^2 } + \frac{10 C_n^2}{\Delta_k}  + \frac{13 \Delta_k}{4} \log_2 \left(\ceil*{\frac{2 \log n}{ \Delta_k^2 ( 1 - \frac{2}{C_n} )^2 }} \vee \frac{4 C_n^2}{\Delta_k^2} \right)  \right) , \label{eqFiniteUCB1init}
\end{align}
where the second inequality is similar to \eqref{eqPesky1} and uses $C_n > 2$. The second bound in Corollary \ref{corFiniteUCB1} follows by adding and subtracting $\sum_{k=2}^K 2 \log ( n ) / \Delta_k$ in \eqref{eqFiniteUCB1init} and using the identity
\begin{equation}
\frac{1}{ ( 1 - \frac{2}{C_n} )^2 } - 1 = \frac{ 4 (1 - \frac{1}{C_n}) }{ C_n ( 1 - \frac{2}{C_n} )^2 } .
\end{equation}

For Corollary \ref{corFiniteKL}, we proceed similarly, choosing $\delta_k = \frac{\Delta_k}{C_n}$ to bound regret by
\begin{align}
& \sum_{k=2}^K  \left( \frac{\Delta_k \log f(n)}{ d ( \mu_k + \frac{\Delta_k}{C_n} , \mu_1 - \frac{\Delta_k}{C_n} ) } + \frac{10 C_n^2}{\Delta_k}  + \frac{13 \Delta_k}{4} \log_2 \left(\ceil*{\frac{\log f(n)}{ d ( \mu_k + \frac{\Delta_k}{C_n} , \mu_1 - \frac{\Delta_k}{C_n} )  }} \vee \frac{4 C_n^2}{\Delta_k^2} \right)   \right) \\
& \quad \leq \sum_{k=2}^K  \left( \frac{\Delta_k \log f(n)}{ d ( \mu_k + \frac{\Delta_k}{C_n} , \mu_1 - \frac{\Delta_k}{C_n} ) } + \frac{10 C_n^2}{\Delta_k}  + \frac{13 \Delta_k}{4} \log_2 \left(\ceil*{\frac{\log f(n)}{ 2 \Delta_k^2 ( 1 - \frac{2}{C_n} )^2  }} \vee \frac{4 C_n^2}{\Delta_k^2} \right)   \right)  ,
\end{align}
where the second inequality follows from Proposition \ref{propKL} (Pinsker's inequality). Thus, adding and subtracting $\sum_{k=2}^K \Delta_k \log ( f(n) ) / d ( \mu_k , \mu_1)$, we aim to show
\begin{equation} \label{eqFiniteKLmain}
\frac{1}{ d ( \mu_k + \frac{\Delta_k}{C_n} , \mu_1 - \frac{\Delta_k}{C_n} ) }  - \frac{1}{ d ( \mu_k , \mu_1) }  \leq  \frac{ (\frac{ 1 }{ \mu_k } + \frac{1}{1-\mu_1} )^2 }{ C_n \Delta_k^2 ( 2 - \frac{1}{C_n} )^2  } 
\end{equation}
Toward this end, we first use Proposition \ref{propKL} to bound the left side of \eqref{eqFiniteKLmain} by
\begin{align}
\frac{1}{ d ( \mu_k + \frac{\Delta_k}{C_n} , \mu_1 - \frac{\Delta_k}{C_n} ) }  - \frac{1}{ d ( \mu_k , \mu_1) }  & = \frac{ d ( \mu_k , \mu_1) - d ( \mu_k + \frac{\Delta_k}{C_n} , \mu_1 - \frac{\Delta_k}{C_n} ) }{ d ( \mu_k , \mu_1)  d ( \mu_k + \frac{\Delta_k}{C_n} , \mu_1 - \frac{\Delta_k}{C_n} ) } \\
& \leq \frac{ d ( \mu_k , \mu_1) - d ( \mu_k + \frac{\Delta_k}{C_n} , \mu_1 - \frac{\Delta_k}{C_n} ) }{ 4 \Delta_k^4 ( 1 - \frac{2}{C_n} )^2 } .
\end{align}
We claim, and will return to prove,
\begin{equation} \label{eqFiniteKLmain2}
d ( \mu_k , \mu_1) - d \left( \mu_k + \frac{\Delta_k}{C_n} , \mu_1 - \frac{\Delta_k}{C_n} \right) \leq \frac{ \Delta_k^2 }{ C_n } \left(\frac{ 1 }{ \mu_k } + \frac{1}{1-\mu_1} \right)^2 .
\end{equation}
The previous two inequalities imply \eqref{eqFiniteKLmain}, which completes the proof of Corollary \ref{corFiniteKL}.

It remains to prove \eqref{eqFiniteKLmain2}. We begin by defining
\begin{gather} \label{eqChiDefn}
\chi_1 = \frac{\Delta_k}{C_n} \left( \log  \frac{\mu_1-\frac{\Delta_k}{C_n}}{ \mu_k + \frac{\Delta_k}{C_n} } + \log \frac{1-\mu_k-\frac{\Delta_k}{C_n}}{ 1-\mu_1 + \frac{\Delta_k}{C_n} } \right)  , \quad \chi_2 = \mu_k \log  \frac{\mu_k(\mu_1-\frac{\Delta_k}{C_n})}{(\mu_k+\frac{\Delta_k}{C_n})\mu_1} , \\
\chi_3 = (1-\mu_k) \log  \frac{(1-\mu_k)(1-\mu_1+\frac{\Delta_k}{C_n})}{(1-\mu_k-\frac{\Delta_k}{C_n})(1-\mu_1)} . 
\end{gather}
Then by definition,
\begin{equation} \label{eqChiRewrite}
d ( \mu_k , \mu_1) - d \left( \mu_k + \frac{\Delta_k}{C_n} , \mu_1 - \frac{\Delta_k}{C_n} \right) = \chi_1 + \chi_2 + \chi_3 .
\end{equation}

To bound $\chi_1$, we simply use $\frac{\Delta_k}{C_n} > 0$, $\log x \leq x-1\ \forall\ x > 0$, and $\mu_1 > \mu_k$ to obtain
\begin{align}
\chi_1 & < \frac{\Delta_k}{C_n} \left( \log  \frac{\mu_1}{ \mu_k  } + \log \frac{1-\mu_k}{ 1-\mu_1 } \right) \leq \frac{\Delta_k}{C_n} \left(   \frac{\mu_1}{ \mu_k  } - 1 +  \frac{1-\mu_k}{ 1-\mu_1 } - 1 \right) \\
& = \frac{\Delta_k}{C_n} \left( \frac{ \Delta_k }{ \mu_k } + \frac{ \Delta_k }{ 1 - \mu_1 } \right) = \frac{ \Delta_k^2 ( 1 - \mu_1 + \mu_k ) }{ C_n \mu_k ( 1 - \mu_ 1 ) } <  \frac{ \Delta_k^2 }{ C_n \mu_k (1-\mu_1) } . \label{eqChi1final}
\end{align}

To bound $\chi_2$ and $\chi_3$, we begin with an intermediate result: $\forall\ p,q \in (0,1), \epsilon \in ( -q , p )$,
\begin{equation} \label{eqFiniteKLpqeps} 
p \log \frac{ p ( q + \epsilon ) }{ ( p - \epsilon ) q } < \epsilon + \frac{ \epsilon p }{ q } + \frac{ 2 \epsilon^2 }{ q(p-\epsilon) } .
\end{equation}
(Note the left side is well-defined by the bounds on $\epsilon$.) To prove \eqref{eqFiniteKLpqeps}, we first compute
\begin{equation}
\frac{ p ( q + \epsilon ) }{ ( p - \epsilon ) q } - 1 = \left( 1 + \frac{\epsilon}{p-\epsilon} \right) \left( 1 + \frac{\epsilon}{q} \right) - 1 = \frac{\epsilon}{q} + \frac{\epsilon}{p-\epsilon} \left( 1 +  \frac{\epsilon}{q} \right)   .
\end{equation}
Hence, using $\log x \leq x-1$, we obtain
\begin{equation}
p \log \frac{ p ( q + \epsilon ) }{ ( p - \epsilon ) q }  \leq \frac{\epsilon p}{q} +  \frac{\epsilon p}{p-\epsilon} \left( 1 +  \frac{\epsilon}{q} \right) .
\end{equation}
The second summand of this upper bound can be rewrriten as
\begin{equation}
\frac{\epsilon p}{p-\epsilon} \left( 1 +  \frac{\epsilon}{q} \right) = \epsilon \left( 1 + \frac{ \epsilon }{ p - \epsilon } \right)   + \frac{\epsilon^2 p}{q(p-\epsilon)} = \epsilon + \frac{\epsilon^2 (q+p)}{q(p-\epsilon)} .
\end{equation}
Combining the previous two lines and using $\epsilon^2 / ( q (p-\epsilon) ) > 0$ and $p,q <1$ yields \eqref{eqFiniteKLpqeps}.

We can now bound $\chi_2$ and $\chi_3$. For $\chi_2$, we choose $p = \mu_k, q = \mu_1, \epsilon = - \frac{\Delta_k}{C_n}$ in \eqref{eqFiniteKLpqeps} (note $- \mu_1 <  \frac{- \Delta_k }{ C_n} < 0 < \mu_k$, as required), and use $\mu_1 > \mu_k$ and $C_n > 2$, to obtain
\begin{equation} \label{eqChi2final}
\chi_2 \leq - \frac{\Delta_k}{C_n} - \frac{ \Delta_k \mu_k }{ C_n \mu_1 } + \frac{ 2 \Delta_k^2 }{  C_n^2\mu_1 ( \mu_k + \frac{\Delta_k}{C_n} ) } < - \frac{\Delta_k}{C_n} - \frac{ \Delta_k \mu_k }{ C_n \mu_1 } + \frac{ \Delta_k^2 }{ C_n \mu_k^2 }  .
\end{equation}
For $\chi_3$, we choose $p = 1 - \mu_k, q = 1 - \mu_1 , \epsilon = \frac{\Delta_k}{C_n}$ (note $\mu_1 - 1 < 0 < \frac{\Delta_k}{C_n} < 1 - \mu_k$) to obtain
\begin{align} \label{eqChi3final}
\chi_3 \leq \frac{\Delta_k}{C_n} + \frac{ \Delta_k ( 1 - \mu_k ) }{ C_n (1 - \mu_1) } + \frac{ 2 \Delta_k^2  }{ C_n^2 ( 1 - \mu_1 ) ( 1 - \mu_k - \frac{\Delta_k}{C_n} ) } < \frac{\Delta_k}{C_n} + \frac{ \Delta_k ( 1 - \mu_k ) }{ C_n (1 - \mu_1) } + \frac{ \Delta_k^2 }{ C_n  (1-\mu_1)^2 } . \ \ 
\end{align}

Finally, we can combine \eqref{eqChi1final}, \eqref{eqChi2final}, and \eqref{eqChi3final} to obtain
\begin{align}
\sum_{i=1}^3 \chi_i & \leq \frac{ \Delta_k^2 }{ C_n \mu_k (1-\mu_1) } + \frac{\Delta_k}{C_n} \left( \frac{1-\mu_k}{1-\mu_1} - \frac{\mu_k}{\mu_1} \right) + \frac{\Delta_k^2}{C_n} \left( \frac{1}{\mu_k^2} + \frac{1}{(1-\mu_1)^2} \right) \\
&  =  \frac{ \Delta_k^2 }{ C_n } \left(\frac{ 1 }{ \mu_k (1-\mu_1) } + \frac{1}{\mu_1(1-\mu_1)} +  \frac{1}{\mu_k^2} + \frac{1}{(1-\mu_1)^2} \right) < \frac{ \Delta_k^2 }{ C_n } \left(\frac{ 1 }{ \mu_k } + \frac{1}{1-\mu_1} \right)^2 , 
\end{align}
where the final inequality is $\mu_1 > \mu_k$. Hence, by \eqref{eqChiRewrite}, \eqref{eqFiniteKLmain2} holds.

\section{Proof of Proposition \ref{propAssumptions}} \label{secProofAssumptions}

For $g^{\texttt{UCB1}}$, Assumption \ref{assMono}, Assumption \ref{assConvSimple}, and the first part of Assumption \ref{assUpper} are immediate. For the second part of Assumption \ref{assUpper}, we use Pinsker's inequality (Proposition \ref{propKL}) to write
\begin{equation}
d ( x , g^{\texttt{UCB1}}_t ( x , s ) ) \geq 2 \left(  g^{\texttt{UCB1}}_t ( x , s ) - x \right)^2 = \log ( t^4 ) / s .
\end{equation}
Thus, we aim to show $t^4 \geq f(t)$. For $t=1$, this is an equality. For $t \in \{2,3,\ldots,\}$, we have
\begin{equation}
f(t) = 1 + t ( \log t )^2 < 1 + t^3 < 2 t^3 \leq t^4 ,
\end{equation}
as desired. The claimed expression for $S_t^{ g^{\texttt{UCB1}} } (x_1,x_2)$ and $S_t^{ g^{\texttt{UCB1}} } (x_1,x_2) / t \rightarrow 0$ in Assumption \ref{assConv1} are immediate. Finally, let $x_1, x_2 , \{ \delta_i \}_{i \in \N}$ be as in Assumption \ref{assConv2}. Then
\begin{equation} \label{eqStUCB1cont}
\frac{ S_t^{  g^{\texttt{UCB1}} } ( x_1 + \delta_i , x_2 - \delta_i ) }{ S_{ \ceil{(1-\delta_i)t}}^{  g^{\texttt{UCB1}} } ( x_1 - \delta_i , x_2 + \delta_i ) } < \frac{  \frac{2 \log t}{ (x_2 - x_1 - 2 \delta_i)^2 } +1   }{   \frac{2 \log ( (1-\delta_i) t )}{ (x_2 - x_1 + 2 \delta_i)^2   } } = \left( \frac{x_2 - x_1 + 2 \delta_i}{x_2 - x_1 - 2 \delta_i} \right)^2 \frac{1 + \frac{ (x_2 - x_1 - 2 \delta_i)^2 }{ 2 \log t } }{ 1 + \frac{ \log(1-\delta_i) }{ \log t } } ,
\end{equation}
where the inequality is by definition of $\ceil{ \cdot }$. Taking $t \rightarrow \infty$, then $i \rightarrow \infty$ on both sides, and deriving the analogous lower bound, establishes Assumption \ref{assConv2}.

For $g^{\texttt{KL-UCB}}$, Assumptions \ref{assMono} and \ref{assConvSimple} are immediate. For the remaining assumptions, first let $x \in (0,1)$ and $t,s \in \N$. Then since $d(x,x) = 0$, $d(x,1) = \infty$, and $d(x,\cdot)$ is strictly increasing and continuous on $(x,1)$, $g^{\texttt{KL-UCB}}_t(x,s)$ is the unique $y \in (x,1)$ s.t.\ $d(x,y) = \frac{ \log f(t) }{s}$. Assumption \ref{assUpper}, the expression for $S^{g^{ \texttt{KL-UCB} } }_t (x_1,x_2)$, and Assumption \ref{assConv1} follow. Finally, Assumption \ref{assConv2} follows from continuity of \eqref{eqKLdefn} in a manner analogous to \eqref{eqStUCB1cont}.

\section{Proof of Proposition \ref{propMessageAccuracy}} \label{secProofMessageAccuracy}

Fix $s \in \N$. Then for $s' \in \{1+\tau(\iota(s)-1) , \ldots, \tau(\iota(s)) \}$, $\tau ( \iota(s)-1) \leq s'-1 < \tau ( \iota(s) )$, so $\iota(s'-1) = \iota(s)-1$ by \eqref{eqDefnIota} and $\tau(\iota(s'-1)) = \eta(s)-1$ by definition. Therefore, we can rewrite the message $M_{k,s'}$ as
\begin{align}
M_{k,s'} %& = \Gamma_{ s' - \tau(\iota(s'-1))  } ( \hat{\mu}_{k, 1 +\tau(\iota(s'-1)  ) } )  - 2 \Gamma_{ s' - \tau(\iota(s'-1)) - 1 } ( \hat{\mu}_{k, 1 +\tau(\iota(s'-1)  ) }   ) \\
& = \Gamma_{ s' + 1 - \eta(s) } ( \hat{\mu}_{k,\eta(s) } ) - 2 \Gamma_{s' - \eta(s)} ( \hat{\mu}_{k,\eta(s) } )  .
\end{align}
We then rewrite the estimate $\bar{\mu}_{k,s}$ as
\begin{align}
\bar{\mu}_{k,s} %& = \sum_{ s' = 1 + \tau( \iota(s) - 1) }^{ \tau(\iota(s)) } 2^{ \tau( \iota(s) - 1) - s'  } M_{k,s'}   + 2^{ \tau(\iota(s)-1) - \tau(\iota(s)) } \\
& = \sum_{ s' = \eta(s) }^{ \eta(s)+\alpha(s)-1 } 2^{ \eta(s) - ( s' + 1 )  } \left( \Gamma_{ s' + 1 - \eta(s) } ( \hat{\mu}_{k,\eta(s) } ) - 2 \Gamma_{s' - \eta(s)} ( \hat{\mu}_{k,\eta(s) } ) \right) + 2^{ -\alpha(s) } \\
& = 2^{ -\alpha(s) } \Gamma_{ \alpha(s) } ( \hat{\mu}_{k,\eta(s) } ) - \Gamma_0 ( \hat{\mu}_{k,\eta(s) } )   + 2^{ -\alpha(s) } = 2^{ -\alpha(s) } \left( \Gamma_{ \alpha(s) } ( \hat{\mu}_{k,\eta(s) } ) + 1 \right) ,
\end{align}
where $\Gamma_0(x) = 0\ \forall\ x \in [0,1]$ by definition. The result follows by definition of $\Gamma$.

\section{Proof of Proposition \ref{propAlphaPsiMain}} \label{secProofAlphaPsi}

We begin with some intermediate results.
\begin{prop} \label{propTauRewrite}
Define $\tau : \Z_+ \rightarrow \Z_+$ as in \eqref{eqDefnTau}. Then
\begin{gather} 
\tau(i) = \sum_{j=1}^{\floor{\log_2 i}} 2^{j-1} j + ( i + 1 - 2^{\floor{\log_2 i} } ) ( \floor{\log_2 i}+1 )\ \forall\ i \geq 2, \quad \tau(1) = 1, \quad \tau(0) = 0 , \label{eqTauRewrite} \\
\tau(i+1) - \tau(i) = 1 + \floor{ \log_2 (i+1) }\ \forall\ i . \label{eqTauDiff} 
\end{gather}
\end{prop}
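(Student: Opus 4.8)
The plan is to reduce both identities to a single fact about $\ceil{\cdot}$ of $\log_2$ together with one standard summation identity. \textbf{Step 1: boundary values and the key floor/ceiling identity.} The values $\tau(0)=0$ and $\tau(1)=1$ follow by substituting $i=0,1$ into \eqref{eqDefnTau} and using $\ceil{\log_2 1}=0$, $\ceil{\log_2 2}=1$. For $i\ge 1$ I would establish that $\ceil{\log_2(i+1)}=\floor{\log_2 i}+1$: writing $b=\floor{\log_2 i}$, this is equivalent to the chain $2^b\le i\le 2^{b+1}-1 \iff 2^b< i+1\le 2^{b+1} \iff \ceil{\log_2(i+1)}=b+1$, each step being immediate from the definitions of $\floor{\cdot}$ and $\ceil{\cdot}$.

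\textbf{Step 2: the closed form \eqref{eqTauRewrite}.} For $i\ge 2$, substitute $\ceil{\log_2(i+1)}=b+1$ (with $b=\floor{\log_2 i}$) into \eqref{eqDefnTau} to get $\tau(i)=1+(i+1)(b+1)-2^{b+1}$. It then suffices to verify the purely algebraic identity
\[
1+(i+1)(b+1)-2^{b+1}=\sum_{j=1}^{b}2^{j-1}j+(i+1-2^{b})(b+1),
\]
which, after cancelling the common term $(i+1)(b+1)$ and the common factor, reduces to $\sum_{j=1}^{b}j\,2^{j-1}=(b-1)2^{b}+1$. This last identity is standard and I would dispatch it with a one-line induction on $b$ (base $b=1$, and $\sum_{j=1}^{m+1}j2^{j-1}=(m-1)2^m+1+(m+1)2^m=m\,2^{m+1}+1$).

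\textbf{Step 3: the difference formula \eqref{eqTauDiff}.} The cases $i\in\{0,1\}$ are checked directly ($\tau(1)-\tau(0)=1$, $\tau(2)-\tau(1)=2$). For $i\ge 2$ I would difference \eqref{eqTauRewrite}, splitting on whether $i+1$ is a power of $2$. If it is not, then $\floor{\log_2(i+1)}=\floor{\log_2 i}=b$, so the $\sum_{j=1}^{b}$ parts in the $i+1$ and $i$ versions of \eqref{eqTauRewrite} coincide and the difference is $(i+2-2^{b})(b+1)-(i+1-2^{b})(b+1)=b+1=1+\floor{\log_2(i+1)}$. If $i+1=2^{b+1}$, then in the $i+1$ version the floor is $b+1$ and the ``$(i+1-2^{b+1})$'' summand vanishes; a short computation using $\tau(i)=\sum_{j=1}^{b}2^{j-1}j+2^{b}(b+1)$ gives $\tau(i+1)-\tau(i)=b+2=1+\floor{\log_2(i+1)}$. (Alternatively, \eqref{eqTauDiff} can be proved directly from \eqref{eqDefnTau}, splitting on whether $\ceil{\log_2(i+2)}$ equals $\ceil{\log_2(i+1)}$ or exceeds it by one, the latter occurring exactly when $i+1$ is a power of $2$.)

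\textbf{Main obstacle.} There is no conceptual difficulty here; the only thing requiring care is the floor/ceiling bookkeeping around powers of $2$ — getting the interval endpoints in Step 1 exactly right, and correctly identifying in Step 3 when $\floor{\log_2(\cdot)}$ increments. Everything else is routine algebra and a textbook summation identity.
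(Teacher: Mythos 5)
Your proof is correct and follows essentially the same route as the paper's: establish $\ceil{\log_2(i+1)}=\floor{\log_2 i}+1$, invoke the inductive identity $\sum_{j=1}^m j2^{j-1}=(m-1)2^m+1$, and prove the difference formula by splitting on whether $i+1$ is a power of two. One small slip in Step 3: when $i+1=2^{b+1}$, the relevant summand in the $i+1$ version of \eqref{eqTauRewrite} is $(i+2-2^{b+1})(b+2)=b+2$, which does not vanish — but your stated conclusion $\tau(i+1)-\tau(i)=b+2$ is nevertheless the correct value, so the argument stands.
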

\begin{proof}
The $i \leq 1$ cases of \eqref{eqTauRewrite} are easily verified. For $i \geq 2$ and \eqref{eqTauDiff}, we first show
\begin{equation}\label{eqFloorLog}
\forall\ i \in \N,\ \floor{ \log_2 (i+1) } - \floor{ \log_2 i } = 1 (  \log_2 (i+1) \in \N ) .
\end{equation}
To prove \eqref{eqFloorLog}, first suppose $\log_2 (i+1) \in \N$. Then $2^{ \log_2 (i+1) - 1 } = 2^{ \log_2 (i+1) }  - 2^{ \log_2 (i+1) - 1 }  \leq 2^{ \log_2 (i+1) } - 1  = i  < i+1$, so $\log_2(i+1) - 1 \leq \log_2 i < \log_2 (i+1)$. These inequalities and $\log_2 (i+1) \in \N$ then imply $\floor{ \log_2 i } = \log_2 (i+1)-1 = \floor{ \log_2 (i+1) } - 1$, as claimed. Next, suppose $\log_2 (i+1) \notin \N$. Then $\floor{ \log_2 (i+1) } < \log_2 (i+1)$, so $2^{ \floor{ \log_2 (i+1) } } < i+1$. Since $2^{ \floor{ \log_2 (i+1) } } \in \N$, this implies $2^{ \floor{ \log_2 (i+1) } } \leq i = 2^{ \log_2 i }$, and thus $\floor{ \log_2(i+1) } \leq \log_2 i$. Together with $\log_2 i < \log_2 (i+1) < \floor{ \log_2(i+1) } + 1$, we conclude $\floor{ \log_2 i } = \floor{ \log_2(i+1)}$. 

From \eqref{eqFloorLog}, we derive another useful identity:
\begin{equation}\label{eqCeilFloorLog} 
\ceil{ \log_2 (i+1) } = 1 + \floor{ \log_2 i } .
\end{equation}
We prove \eqref{eqCeilFloorLog} by considering two cases. First, if $\log_2(i+1) \in \N$, then $1 + \floor{ \log_2 i } = \floor{ \log_2 (i+1) }  = \ceil{ \log_2 (i+1) }$, where we used \eqref{eqFloorLog} and $\log_2(i+1) \in \N$. Next, if $\log_2 (i+1) \notin \N$, \eqref{eqFloorLog} implies $1 + \floor{ \log_2 i } = 1 + \floor{ \log_2 (i+1) } = \ceil{ \log_2 (i+1) }$.

We next note the following identity, which is easily proven by induction on $m$:
\begin{equation}\label{eqSuperGeom}
\sum_{j=1}^m 2^{j-1} j = 1 + 2^m ( m - 1 )\ \forall\ m \in \N .
\end{equation}

Combining \eqref{eqSuperGeom}, \eqref{eqCeilFloorLog}, and definition of $\tau$, we obtain \eqref{eqTauRewrite}:
\begin{align}
& \sum_{j=1}^{\floor{\log_2 i}} 2^{j-1} j + ( i + 1 - 2^{\floor{\log_2 i} } ) ( \floor{\log_2 i}+1 )  \\
& \quad\quad = 1 + 2^{ \floor{ \log_2 i } } ( \floor{\log_2 i} - 1 ) + ( i + 1 - 2^{\floor{\log_2 i} } ) ( \floor{\log_2 i}+1 ) \\
& \quad\quad = 1 + (i+1) ( \floor{\log_2 i}+1 ) -2^{\floor{\log_2 i} + 1}  =  1 + (i+1) \ceil{ \log_2(i+1) } - 2^{ \ceil{ \log_2 (i+1) } } = \tau(i) .
\end{align} 

Finally, we prove \eqref{eqTauDiff} by considering two cases. First, if $\log_2 (i+1) \notin \N$, \eqref{eqFloorLog} implies $\floor{ \log_2 (i+1) } = \floor{ \log_2 i }$, so by \eqref{eqTauRewrite}, $\tau(i+1) - \tau(i) = \floor{ \log_2 (i+1) } + 1$. Next, if $\log_2 (i+1) \in \N$, then $\floor{ \log_2 (i+1) } = \floor{ \log_2 i } + 1$ by \eqref{eqFloorLog}, so by \eqref{eqTauRewrite},
\begin{align}
\tau(i+1) - \tau(i) & = 2^{ \floor{ \log_2 (i+1) } - 1 } \floor{ \log_2 (i+1) } + ( \floor{\log_2 (i+1)} + 1 ) ( i +2 - 2^{ \floor{ \log_2 (i+1) } } ) \\
& \quad\quad - \floor{\log_2 (i+1)}  ( i+1 - 2^{ \floor{ \log_2 (i+1) } - 1 } )  = \floor{\log_2(i+1)} + 1 . \qedhere
\end{align}
\end{proof}

\begin{prop} \label{propPsiUpper}
For any $s \in \N$, $\eta(s) \leq 2^{\alpha(s)}\alpha(s)$.
\end{prop}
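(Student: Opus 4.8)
The plan is to reduce the inequality to the elementary fact $i + 1 \leq 2^{\floor{\log_2 i}+1}$, leveraging the identities already collected in Proposition \ref{propTauRewrite}.

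First I would fix $s \in \N$ and set $i = \iota(s)$. Since $\tau(1) = 1$ by \eqref{eqDefnTau} and $s \geq 1$, the definition \eqref{eqDefnIota} gives $i = \iota(s) \geq 1$, so in particular $\tau(i-1)$ is well-defined (as are $\alpha(s)$ and $\eta(s)$). Writing $m = \floor{\log_2 i} \geq 0$, I would invoke \eqref{eqTauDiff} with its index taken to be $i-1$, which yields $\alpha(s) = \tau(i) - \tau(i-1) = 1 + m$. Rearranging gives $\tau(i-1) = \tau(i) - (m+1)$, and hence $\eta(s) = 1 + \tau(i-1) = \tau(i) - m$.

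The second step is to make $\tau(i)$ explicit. Combining the definition \eqref{eqDefnTau} with the identity $\ceil{\log_2(i+1)} = 1 + \floor{\log_2 i} = m+1$ from \eqref{eqCeilFloorLog}, one gets $\tau(i) = 1 + (i+1)(m+1) - 2^{m+1}$, so that $\eta(s) = 1 + (i+1)(m+1) - 2^{m+1} - m$. Since $m = \floor{\log_2 i}$ forces $i \leq 2^{m+1} - 1$, i.e.\ $i + 1 \leq 2^{m+1}$, substituting $(i+1)(m+1) \leq 2^{m+1}(m+1)$ gives
\begin{equation}
\eta(s) \leq 1 + 2^{m+1}(m+1) - 2^{m+1} - m = 1 - m + 2^{m+1} m \leq 2^{m+1}(m+1) = 2^{\alpha(s)}\alpha(s) ,
\end{equation}
where the final inequality is just $1 - m \leq 1 < 2 \leq 2^{m+1}$. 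As $\alpha(s) = m+1$, this is exactly the claimed bound.

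I do not expect a real obstacle: the entire argument is one short computation driven by $i + 1 \leq 2^{\floor{\log_2 i}+1}$. The only points deserving attention are (a) confirming $\iota(s) \geq 1$ so that $\tau(i-1)$, $\alpha(s)$, and $\eta(s)$ all make sense, and (b) being careful to apply \eqref{eqTauDiff} at the index $i-1$ (rather than $i$) and \eqref{eqCeilFloorLog} at $i$, so that $\alpha(s) = m+1$ comes out with the right offset.
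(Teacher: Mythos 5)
Your proof is correct and follows essentially the same route as the paper's: both compute $\alpha(s) = 1+\floor{\log_2 \iota(s)}$ from \eqref{eqTauDiff}, express $\eta(s)$ via the closed form of $\tau$, and reduce to the bound $\iota(s)+1 \leq 2^{1+\floor{\log_2 \iota(s)}}$. The only difference is cosmetic: the paper splits into the cases $\iota(s)=1$ and $\iota(s)\geq 2$, whereas your substitution handles both uniformly.
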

\begin{proof}
First note that by definition of $\alpha$ and Proposition \ref{propTauRewrite},
\begin{equation}\label{eqAlphaIota}
\alpha(s) =  \tau ( \iota(s) ) - \tau ( \iota(s) - 1 ) = 1 + \floor{ \log_2 \iota(s) } ,
\end{equation}
which clearly implies
\begin{equation}\label{eqPsiUpperA}
2^{\alpha(s)} \alpha(s) = 2^{ 1 + \floor{ \log_2 \iota(s) } } ( 1 + \floor{ \log_2 \iota(s) } ) .
\end{equation}
On the other hand, by definition of $\eta$, $\tau$, and $\iota$, 
\begin{equation}\label{eqPsiUpperB}
\eta(s) = 1 + \tau ( \iota(s)-1) = \iota(s) \ceil{ \log_2 \iota(s) }  + 2 - 2^{ \ceil{ \log_2 \iota(s) } } .
\end{equation}
Now consider two cases. First, if $s$ is s.t.\ $\iota(s) = 1$, then \eqref{eqPsiUpperA} yields $2^{\alpha(s)} \alpha(s) = 2$, while \eqref{eqPsiUpperB} yields $\eta(s) = 1$. Next, if $s$ is s.t.\ $\iota(s) \geq 2$, then $2 - 2^{ \ceil{ \log_2 \iota(s) } } \leq 0$, so by \eqref{eqPsiUpperA} and \eqref{eqPsiUpperB},
\begin{align}
2^{ \alpha(s) } \alpha(s) \geq \iota(s) \ceil{ \log_2 \iota(s) }\geq \iota(s) \ceil{ \log_2 \iota(s) } + 2 - 2^{ \ceil{ \log_2 \iota(s) } } & = \eta(s) . \qedhere
\end{align}
\end{proof}

\begin{prop} \label{propPsiLower}
For any $s \in \N$, $\eta(s) \geq s - 2 \log_2 ( s) -1$.
\end{prop}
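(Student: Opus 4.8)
The plan is to reduce $\eta(s)$ to $\tau(\iota(s))$ minus a single floor term, lower-bound $\tau(\iota(s))$ using the maximality that defines $\iota$, and then crudely bound the resulting floor terms by logarithms. First, since $\tau(1) = 1$ by \eqref{eqDefnTau}, the definition \eqref{eqDefnIota} gives $\iota(s) \geq 1$ for every $s \in \N$. Hence \eqref{eqTauDiff} applies with $i = \iota(s) - 1$, yielding $\tau(\iota(s)) - \tau(\iota(s) - 1) = 1 + \floor{\log_2 \iota(s)}$, and substituting this into the definition $\eta(s) = 1 + \tau(\iota(s) - 1)$ gives the identity
\begin{equation}
\eta(s) = \tau(\iota(s)) - \floor{\log_2 \iota(s)} .
\end{equation}
(This is the same as $\eta(s) = \tau(\iota(s)) - \alpha(s) + 1$, using $\alpha(s) = 1 + \floor{\log_2 \iota(s)}$ from the proof of Proposition \ref{propPsiUpper}.)

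Next, by maximality in \eqref{eqDefnIota} we have $\tau(\iota(s) + 1) > s$, and since $\tau$ is integer-valued this gives $\tau(\iota(s) + 1) \geq s + 1$. Applying \eqref{eqTauDiff} with $i = \iota(s)$ then yields $\tau(\iota(s)) + 1 + \floor{\log_2(\iota(s) + 1)} \geq s + 1$, i.e.\ $\tau(\iota(s)) \geq s - \floor{\log_2(\iota(s) + 1)}$. Combined with the identity above,
\begin{equation}
\eta(s) \geq s - \floor{\log_2(\iota(s) + 1)} - \floor{\log_2 \iota(s)} .
\end{equation}
To finish, I would bound $\iota(s)$: summing \eqref{eqTauDiff} from $0$ and using $\tau(0) = 0$ shows $\tau(i) \geq i$ for all $i \in \Z_+$, so $\iota(s) \leq \tau(\iota(s)) \leq s$, the last step by \eqref{eqDefnIota}. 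Therefore $\floor{\log_2 \iota(s)} \leq \log_2 s$ and $\floor{\log_2(\iota(s) + 1)} \leq \log_2(s + 1) \leq \log_2(2s) = 1 + \log_2 s$ (using $s \geq 1$), and plugging into the last display gives $\eta(s) \geq s - (1 + \log_2 s) - \log_2 s = s - 2 \log_2 s - 1$.

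There is no serious obstacle here; the proof is a short chain of the already-established facts \eqref{eqTauDiff} and $\tau(0) = 0$. The only points needing care are that $\iota(s) \geq 1$ (so that both the identity for $\eta(s)$ and the floor terms are meaningful) and that integrality of $\tau$ is used to upgrade the strict inequality $\tau(\iota(s) + 1) > s$ to $\tau(\iota(s) + 1) \geq s + 1$. The argument in fact proves the slightly stronger bound $\eta(s) \geq s - \floor{\log_2(\iota(s) + 1)} - \floor{\log_2 \iota(s)}$, but the stated form is all that is needed downstream (e.g.\ in Proposition \ref{propAlphaPsiMain}).
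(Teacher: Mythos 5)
Your proof is correct and follows essentially the same route as the paper's: both express $\eta(s)$ via the telescoping increments $\tau(i+1)-\tau(i) = 1+\floor{\log_2(i+1)}$ from Proposition \ref{propTauRewrite}, use $\tau(\iota(s)+1)\geq s+1$ together with integrality, and close by showing $\iota(s)\leq s$ via $\tau(i)\geq i$. The only cosmetic difference is that the paper bounds $\floor{\log_2(\iota(s)+1)}+\floor{\log_2\iota(s)}\leq 1+2\log_2\iota(s)$ using its identity \eqref{eqFloorLog}, whereas you bound $\log_2(\iota(s)+1)\leq 1+\log_2 s$ directly; both yield the same final estimate.
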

\begin{proof}
First note that by definition of $\eta$ and \eqref{eqAlphaIota},
\begin{align}
\eta(s) & =\tau ( \iota (s) + 1 ) + 1 - \left( \tau ( \iota (s) + 1 ) - \tau ( \iota (s)  ) + \tau ( \iota (s)  ) - \tau ( \iota (s) - 1 ) \right) \\
& = \tau ( \iota (s) + 1 ) - 1 -  ( \floor{ \log_2 ( \iota(s)+1) } + \floor{ \log_2 \iota(s) } ) .
\end{align}
Next, recall that by definition of $\tau$ and $\iota$, $\tau ( \iota (s) + 1 ) > s$, which implies (since $\tau : \Z_+ \rightarrow \Z_+$) that $\tau ( \iota (s) + 1 ) \geq s+1$. Furthermore, by \eqref{eqFloorLog}, $\floor{ \log_2 ( \iota(s)+1) } + \floor{ \log_2 \iota(s) } \leq 1 + 2 \log_2 \iota(s)$. Combining the above, we obtain $\eta(s) \geq s - 2 \log_2 ( \iota(s) ) - 1$. Thus, to complete the proof, we aim to show $\iota(s) \leq s$. Toward this end, note that since $s \geq \tau(\iota(s))$ by definition and $\tau$ is increasing by Proposition \ref{propTauRewrite}, it suffices to prove $\tau(i) \geq i\ \forall\ i \in \Z_+$. This follows from Proposition \ref{propTauRewrite}: it holds with equality for $i \in \{0,1\}$, and, for $i \in \{2,3,\ldots\}$,
\begin{align}
\tau(i) \geq \sum_{j=1}^{ \floor{ \log_2 i } } 2^{j-1} + i +1 - 2^{ \floor{ \log_2 i } } & = i . \qedhere
\end{align}
\end{proof}

\begin{prop} \label{propLogPolyIneq}
Define $h_1, h_2, h_3 : \R_+ \rightarrow \R$ by
\begin{equation}
h_1(x) = x - 3 \log_2 x , \quad h_2(x) = \sqrt{x} -  \log_2 x , \quad h_3(x) = x - 4 \log_2 x -2  \quad \forall\ x \in \R_+ .
\end{equation}
Then $h_1(x) \geq 0\ \forall\ x \geq 16$, $h_2(x) \geq 0\ \forall\ x \geq 16$, and $h_3(x) \geq 0\ \forall\ x \geq 22$.
\end{prop}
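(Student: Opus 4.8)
The plan is, for each of the three functions $h_1,h_2,h_3$, to show it is nondecreasing on the half-line of interest (by inspecting its derivative) and then to check that its value at the left endpoint is nonnegative; nonnegativity on the whole half-line is then immediate. Throughout I would rewrite $\log_2 x = (\ln x)/\ln 2$ so that the derivatives are clean, and I would only ever use the crude bound $\ln 2 > 1/2$ (together with $22 < 32$ at one point), which makes every threshold comparison below trivial.

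First, for $h_1(x) = x - 3\log_2 x$ we have $h_1'(x) = 1 - 3/(x\ln 2) \geq 0$ whenever $x \geq 3/\ln 2$; since $3/\ln 2 < 16$, the function $h_1$ is nondecreasing on $[16,\infty)$, and $h_1(16) = 16 - 3\log_2 16 = 16 - 12 = 4 \geq 0$, giving $h_1 \geq 0$ on $[16,\infty)$. Next, for $h_2(x) = \sqrt{x} - \log_2 x$ we have $h_2'(x) = \tfrac{1}{2\sqrt x} - \tfrac{1}{x\ln 2} = \tfrac{1}{2\sqrt x}\,(1 - \tfrac{2}{\sqrt x\,\ln 2})$, which is $\geq 0$ once $\sqrt x \geq 2/\ln 2$, i.e.\ once $x \geq 4/(\ln 2)^2 < 16$; hence $h_2$ is nondecreasing on $[16,\infty)$, and since $h_2(16) = 4 - 4 = 0$, we obtain $h_2 \geq 0$ on $[16,\infty)$. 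Finally, for $h_3(x) = x - 4\log_2 x - 2$ we have $h_3'(x) = 1 - 4/(x\ln 2) \geq 0$ once $x \geq 4/\ln 2 < 22$, so $h_3$ is nondecreasing on $[22,\infty)$; and because $22 < 32 = 2^5$ gives $\log_2 22 < 5$, we get $h_3(22) = 20 - 4\log_2 22 > 20 - 20 = 0$, so $h_3 \geq 0$ on $[22,\infty)$.

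There is essentially no obstacle: the only things to verify are that each derivative's sign change occurs strictly below the stated lower endpoint (so monotonicity holds on the entire half-line, not merely eventually) and the three endpoint evaluations — all of which follow from $\ln 2 > 1/2$ and $22 < 32$. If desired, I would replace the derivative computations by a direct comparison argument (e.g.\ $2^{y} \geq y^3$ for $y \geq 4$ for $h_1$, after the substitution $x = 2^y$), but the calculus route above is the most direct and I expect to present that.
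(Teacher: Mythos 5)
Your proposal is correct and follows essentially the same route as the paper's proof: compute each derivative, use $\log 2 > 1/2$ to show it is nonnegative on the relevant half-line, and check the left-endpoint values $h_1(16)=4$, $h_2(16)=0$, and $h_3(22) \geq 22 - 4\log_2 32 - 2 = 0$. All the computations and threshold comparisons check out.
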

\begin{proof}
We first compute $h_1(16) = 4$, $h_2(16) = 0$, and $h_3(22) \geq 22 - 4 \log_2 32 -2 = 0$. Next, we compute the corresponding derivatives:
\begin{equation}
h_1'(x) = 1 - \frac{3}{x \log 2} , \quad h_2'(x) = \frac{1}{\sqrt{x}} \left( \frac{1}{2} - \frac{1}{\sqrt{x} \log 2} \right) , \quad h_3'(x) = 1 - \frac{4}{x \log 2} .
\end{equation}
Note  $\log 2 \approx 0.69 > 0.5$, so $h_1'(x) , h_2'(x) \geq 0\ \forall\ x \geq 16$ and $h_3'(x) \geq 0\ \forall\ x \geq 22$. Thus, $h_1(16) \geq 0$ and $h_1$ increases on $[16,\infty)$, which implies $h_1(x) \geq 0\ \forall\ x \geq 16$. The $h_2$ and $h_3$ inequalities are argued similarly.
\end{proof}

We now prove Proposition \ref{propAlphaPsiMain}. Since $\iota$ is increasing by definition, $\tau$ is increasing by Proposition \ref{propTauRewrite}, and $\eta(s) = 1 + \tau(\iota(s)-1)$, it suffices to show $\eta ( \floor{ u + 3 \log_2 u }  )  \geq u$. We have
\begin{align}
\eta ( \floor{ u + 3 \log_2 u }  ) & \geq  \floor{ u + 3 \log_2 u }  - 2 \log_2 ( \floor{ u + 3 \log_2 u } ) - 1 \\
& \geq u + 3 \log_2 u - 2 \log_2 ( u + 3 \log_2 u ) - 2 \\
& = u + \log_2 \left( \frac{u^3}{4 ( u + 3 \log_2 u )^2 } \right)   \geq u + \log_2 \left( 4 \left( \frac{u}{u +3 \log_2 u } \right)^2 \right) \geq u ,
\end{align}
where the first inequality holds by Proposition \ref{propPsiLower}, the second by $x-1 \leq \floor{x} \leq x$, and the third and fourth by $u \geq 16$ (so $u - 3 \log_2 u \in (0,u)$ by the first inequality in Proposition \ref{propLogPolyIneq}).

Next, we observe $\eta(s) \geq s/2$ follows from Proposition \ref{propPsiLower} and the second inequality in Proposition \ref{propLogPolyIneq} (note $u \geq 16$ implies $s \geq u + 3 \log_2 u \geq 28$, so this inequality applies):
\begin{equation}
\eta(s) \geq s - 2 \log_2 (s) - 1 = \frac{s}{2} + \frac{1}{2} ( s - 4 \log_2(s) - 2 ) \geq \frac{s}{2} .
\end{equation}

Finally, to show $\alpha(s) \geq \log_2 \sqrt{s}$, we assume instead that $\alpha(s) <\log_2 \sqrt{s}$. Then $\eta(s) < \sqrt{s} \log_2 \sqrt{s}$ by Proposition \ref{propPsiUpper}. Combined with $\eta(s) \geq s/2$, this implies $\sqrt{s} < \log_2 s$, contradicting the third inequality in Proposition \ref{propLogPolyIneq} (which holds since $s \geq 16$).

\section{Existing results}

\begin{prop} \label{propKL}
$\forall\ 0 < p < q < r < 1$, $d(p,r) \geq d(p,q) + d(q,r)$ and $d(p,q) \geq 2 ( p-q)^2$.
\end{prop}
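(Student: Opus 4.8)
The statement is the conjunction of two independent claims, so I would prove them separately.

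For the superadditivity $d(p,r) \geq d(p,q) + d(q,r)$, the plan is a direct computation. Expanding the three divergences via \eqref{eqKLdefn} and collecting terms, the $\log p$ and $\log(1-p)$ contributions combine cleanly and the $\log r$, $\log(1-r)$ contributions cancel against those of $d(q,r)$, so I expect to arrive at
\begin{equation}
d(p,r) - d(p,q) - d(q,r) = (p-q)\log\frac{q(1-r)}{r(1-q)}.
\end{equation}
Then I would check the sign of each factor: since $p < q$ we have $p - q < 0$; and since $q < r$ we have $q < r$ and $1-r < 1-q$, so $\tfrac{q(1-r)}{r(1-q)} < 1$ and its logarithm is negative. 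The product of two negative quantities is nonnegative, which gives the claim.

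For the Bernoulli Pinsker inequality $d(p,q) \geq 2(p-q)^2$, I would fix $p \in (0,1)$ and set $\phi(q) = d(p,q) - 2(p-q)^2$ for $q \in (0,1)$, so that $\phi(p) = 0$. Differentiating in $q$ gives $\tfrac{d}{dq} d(p,q) = \tfrac{q-p}{q(1-q)}$, hence
\begin{equation}
\phi'(q) = \frac{q-p}{q(1-q)} - 4(q-p) = (q-p)\left(\frac{1}{q(1-q)} - 4\right).
\end{equation}
Since $q(1-q) \leq \tfrac14$ on $(0,1)$, the second factor is nonnegative, so $\phi'(q)$ has the sign of $q-p$; thus $\phi$ is nonincreasing on $(0,p]$ and nondecreasing on $[p,1)$, so it attains its minimum value $\phi(p)=0$ at $q=p$. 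Therefore $\phi(q) \geq 0$ for all $q \in (0,1)$, which is the desired inequality (in particular it holds for all $q$, not just $q > p$).

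I do not anticipate a serious obstacle: both parts are elementary once the right manipulation is identified. The only mildly delicate point is getting the cancellation in the first part correct — it is easy to slip a sign while collecting the logarithms — so I would sanity-check the factored identity at a convenient numerical triple. An alternative to the calculus argument in the second part is to invoke convexity of $q \mapsto d(p,q)$ together with its value $0$ and derivative $0$ at $q = p$, but the one-variable monotonicity argument above is the most self-contained.
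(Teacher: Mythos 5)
Your proposal is correct, and both computations check out: the expansion indeed collapses to $d(p,r)-d(p,q)-d(q,r)=(p-q)\log\frac{q(1-r)}{r(1-q)}$, which is a product of two negative factors under $p<q<r$, and the derivative $\frac{\partial}{\partial q}d(p,q)=\frac{q-p}{q(1-q)}$ together with $q(1-q)\le\frac14$ gives the Pinsker bound. The paper itself does not prove this proposition --- it simply cites Lemma 10.2 of Lattimore and Szepesv\'ari --- so your argument is a legitimate self-contained replacement; the chain-rule/factoring identity for the first part and the monotonicity-of-$\phi$ argument for the second are exactly the standard derivations that underlie the cited result, and you gain nothing or lose nothing relative to the citation beyond making the note self-contained.
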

\begin{proof}
See, e.g., \cite[Lemma 10.2]{lattimore2020bandit}.
\end{proof}
 
\begin{prop} \label{propHoeffding}
Let $Z = \frac{1}{m} \sum_{i=1}^m Z_i$, where $m \in \N$ and $\{ Z_i \}_{i=1}^m$ are i.i.d.\ $[0,1]$-valued random variables. Then for any $t > 0$, $\P ( Z \geq \E Z + t ) , \P ( Z \leq \E Z - t ) \leq e^{-2 t^2 m}$.
\end{prop}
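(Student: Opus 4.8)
The plan is to use the standard Chernoff (exponential Markov) argument. First I would reduce the two bounds to one: since $1 - Z_i$ are also i.i.d.\ $[0,1]$-valued, $\P ( Z \le \E Z - t ) = \P ( \frac{1}{m} \sum_{i=1}^m (1 - Z_i) \ge \E [\frac{1}{m} \sum_{i=1}^m (1-Z_i)] + t )$, so it suffices to prove $\P ( Z \ge \E Z + t ) \le e^{-2 t^2 m}$. Writing $S = \sum_{i=1}^m ( Z_i - \E Z_i )$, the event $\{ Z \ge \E Z + t \}$ equals $\{ S \ge mt \}$, and for any $\lambda > 0$, Markov's inequality applied to $e^{\lambda S}$ together with independence gives $\P ( S \ge mt ) \le e^{-\lambda m t} \E [ e^{\lambda S} ] = e^{-\lambda m t} \prod_{i=1}^m \E [ e^{ \lambda ( Z_i - \E Z_i ) } ]$.

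The crux is then a moment generating function bound for a single bounded variable (Hoeffding's lemma): for any $[0,1]$-valued $X$ and any $\lambda \in \R$, $\E [ e^{ \lambda ( X - \E X ) } ] \le e^{ \lambda^2 / 8 }$. I would prove this by using convexity of $x \mapsto e^{\lambda x}$ on $[0,1]$ to write $e^{\lambda x} \le (1-x) + x e^{\lambda}$, hence $\E [ e^{\lambda X} ] \le 1 - p + p e^{\lambda}$ with $p = \E X$; therefore $\E [ e^{ \lambda ( X - \E X ) } ] \le e^{\phi(\lambda)}$ where $\phi(\lambda) = -\lambda p + \log ( 1 - p + p e^{\lambda} )$. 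A direct computation gives $\phi(0) = 0$, $\phi'(0) = 0$, and $\phi''(\lambda) = r (1-r) \le \frac{1}{4}$, where $r = p e^{\lambda} / ( 1 - p + p e^{\lambda} ) \in (0,1)$; Taylor's theorem with Lagrange remainder then yields $\phi(\lambda) \le \lambda^2 / 8$.

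Combining the two displays, $\P ( Z \ge \E Z + t ) \le e^{ - \lambda m t + m \lambda^2 / 8 }$ for every $\lambda > 0$; minimizing the exponent over $\lambda$ (the minimizer is $\lambda = 4t$) gives exponent $-2 m t^2$, i.e.\ $\P ( Z \ge \E Z + t ) \le e^{-2 m t^2}$. The lower-tail bound then follows from the reduction in the first step. I expect the only genuine obstacle to be Hoeffding's lemma; everything else is routine bookkeeping. Since the result is classical, an alternative is simply to cite Hoeffding's original paper or a standard reference such as \cite{lattimore2020bandit} in place of the argument above.
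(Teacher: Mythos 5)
Your argument is correct: it is the standard Chernoff/Hoeffding's-lemma proof, and each step (the reduction of the lower tail via $1-Z_i$, the convexity bound $e^{\lambda x}\le (1-x)+xe^{\lambda}$, the computation $\phi''(\lambda)=r(1-r)\le\frac14$, and the optimization $\lambda=4t$ giving exponent $-2mt^2$) checks out. The paper does not prove this proposition at all; it simply cites a standard reference, which is precisely the alternative you note at the end, so your write-up is if anything more self-contained than the paper's.
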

\begin{proof}
See, e.g., \cite[Theorem 1]{dubhashi2009concentration}.
\end{proof}

\end{document}